
\documentclass{article}

\usepackage{microtype}
\usepackage{graphicx}
\usepackage{subfigure}
\usepackage{booktabs} 

\usepackage{hyperref}



\usepackage[accepted]{icml2024}


\usepackage{url}            
\usepackage{booktabs}       
\usepackage{amsfonts}       
\usepackage{nicefrac}       
\usepackage{microtype}      
\usepackage{xcolor}         

\usepackage{natbib}
\usepackage{color}
\usepackage{placeins}
\usepackage{stfloats}

\usepackage{amsfonts}       
\usepackage{url}
\usepackage{amsmath}
\usepackage{amssymb}
\usepackage{amsthm}
\usepackage{wrapfig}
\usepackage{mathtools}
\usepackage{algorithmic}
\usepackage{footmisc}
\usepackage{bibentry}
\nobibliography*
\usepackage{mathtools}

\usepackage{physics}
\usepackage{etoolbox}
\usepackage{makecell}
\usepackage{enumerate}
\usepackage{lastpage}
\usepackage{float}
\usepackage{placeins}
\usepackage{thmtools, thm-restate}
\mathtoolsset{showonlyrefs}


\definecolor{dark-blue}{RGB}{0,0,191}
\hypersetup{
colorlinks, linkcolor={dark-blue},
citecolor={dark-blue}, urlcolor={black},
}

\newcommand{\explain}[1]{\tag*{(#1)}}

\newcommand{\explaind}[2]{\makebox[0.85\textwidth]{$\displaystyle#1$\hfill(#2)}}

\newcommand{\fS}{\mathcal{S}}
\newcommand{\fA}{\mathcal{A}}

\newcommand{\fX}{\mathcal{X}}

\newcommand{\R}{\mathbb{R}}
\newcommand{\E}{\mathbb{E}}
\newcommand{\V}{\mathbb{V}}

\newcommand{\na}{{|\fA|}}

\newcommand{\pdisg}{G^{\text{PDIS}}}

\newcommand{\tb}[1]{{\textbf{#1}}}


\theoremstyle{plain}

\newtheorem{theorem}{Theorem}
\newtheorem{lemma}{Lemma}

\allowdisplaybreaks


\icmltitlerunning{Efficient Policy Evaluation with Offline Data Informed Behavior Policy Design}

\begin{document}

\twocolumn[
\icmltitle{Efficient Policy Evaluation with Offline Data Informed Behavior Policy Design}






\icmlsetsymbol{equal}{*}

\begin{icmlauthorlist}
\icmlauthor{Shuze Liu}{uva}
\icmlauthor{Shangtong Zhang}{uva}
\end{icmlauthorlist}

\icmlaffiliation{uva}{Department of Computer Science, University of Virginia}

\icmlcorrespondingauthor{Shuze Liu}{shuzeliu@virginia.edu}

\icmlkeywords{Reinforcement Learning, policy evaluation, Monte Carlo evaluation}

\vskip 0.3in
]



\printAffiliationsAndNotice{}  

\begin{abstract}
Most reinforcement learning practitioners evaluate their policies with online Monte Carlo estimators for either hyperparameter tuning or testing different algorithmic design choices, where the policy is repeatedly executed in the environment to get the average outcome. Such massive interactions with the environment are prohibitive in many scenarios. In this paper, we propose novel methods that improve the data efficiency of online Monte Carlo estimators while maintaining their unbiasedness. We first propose a tailored closed-form behavior policy that provably reduces the variance of an online Monte Carlo estimator. We then design efficient algorithms to learn this closed-form behavior policy from previously collected offline data. Theoretical analysis is provided to characterize how the behavior policy learning error affects the amount of reduced variance. Compared with previous works, our method achieves better empirical performance in a broader set of environments, with fewer requirements for offline data.
\end{abstract}

\section{Introduction}

Reinforcement Learning (RL, \citet{sutton2018reinforcement}) has recently demonstrated great success in solving sequential decision-making problems.
For example,
AlphaStar \citep{vinyals2019grandmaster} defeats the best human StarCraft II players and is ranked at the GrandMaster level in the StarCraft ladder.
The canonical RL paradigm behind the success, however, requires massive active interactions with the environment to obtain data \citep{sutton1988learning,watkins1992q,sutton2000policy,mnih2015human}.
Those data are called online data, and this paradigm is called online RL.
Requiring massive online data is,
however,
prohibitive in many scenarios.
First, obtaining massive online data can be both expensive and slow in the real 
world \citep{li2019perspective, Zhang_2023}. 
Second,
even if a simulator is available,
obtaining massive online data can still be prohibitively slow for high-fidelity simulation \citep{chervonyi2022semianalytical}.

Offline RL \citep{ernst2005tree,lange2012batch,fujimoto2019off,levine2020offline} attacks this issue using existing, previously logged data, called offline data.
Compared with online data,
offline data is cheaper and safer \citep{li2019perspective,Zhang_2023}.
Offline RL also demonstrates great success.
For example,
\citet{mathieu2023alphastar} train an offline AlphaStar,
which uses only existing human replays without any interaction with the StarCraft II simulator during training.
The offline AlphaStar obtains over 90\% win rates against the supervised learning agent in \citet{vinyals2019grandmaster}.

However,
most RL practitioners,
even offline RL practitioners,
still heavily rely on online Monte Carlo estimators.
For example, \citet{mathieu2023alphastar} repeatedly execute their trained offline AlphaStar agents in the StartCraft II simulator and use the win rates as the performance metric for hyperparameter tuning and evaluating different algorithmic design choices.
This evaluation practice is the straightforward online Monte Carlo evaluation and requires massive online data. 
There are indeed offline evaluation methods,
most of which,
however,
still rely on online Monte Carlo evaluation for hyperparameter tuning and testing different algorithmic design choices (see, e.g., \citet{fu2020d4rl,gulcehre2020rl,schrittwieser2021online,mathieu2023alphastar}).



\textbf{Improving the sample efficiency of online Monte Carlo estimators while maintaining their unbiasedness} is thus a need for both online and offline RL practitioners.
We emphasize \emph{unbiasedness} because it is arguably one of the key reasons that make Monte Carlo estimators so dominating.
In this paper,
we make three contributions toward fulfilling this need.
\tb{First}, we propose tailored closed-form behavior policies that \emph{provably} reduce the variance of online Monte Carlo estimators.
\tb{Second},
we design efficient algorithms to learn the closed-form behavior policies from offline data.
Theoretical analysis is provided to characterize how the behavior policy learning error affects the amount of reduced variance.
Notably, this learning error does not introduce any bias in the estimation.
\tb{Third},
we conduct thorough empirical studies in a broad set of environments.
Compared with previous works, 
our method achieves better empirical performance
while being less restrictive on offline data.

\section{Background}
We consider a finite horizon Markov Decision Process (MDP, \citet{puterman2014markov}) with a finite state space $\fS$, a finite action space $\fA$, 
a reward function $r: \fS \times \fA \to \R$,
a transition probability function $p: \fS \times \fS \times \fA \to [0, 1]$,
an initial distribution $p_0: \fS \to [0, 1]$,
and a constant horizon length $T$.  
Without loss of generality, 
we consider the undiscounted setting for simplifying notations.
Our results naturally apply to the discounted setting \citep{puterman2014markov} as long as the horizon is fixed and finite.
For any integer $n$,
we define as shorthand $[n] \doteq \qty{0, 1, \dots, n}$.
At time step 0, 
an initial state $S_0$ is sampled from $p_0$.
At time step $t \in [T-1]$,
an action $A_t$ is sampled according to $\pi_t(\cdot \mid S_t)$
where $\pi_t: \fA \times \fS \to [0, 1]$ is the policy at time step $t$.  
A finite reward $R_{t+1} \doteq r(S_t, A_t)$ is then emitted and a successor state $S_{t+1}$ is sampled from $p(\cdot \mid S_t, A_t)$. 
We define abbreviations $\pi_{i:j} \doteq \qty{\pi_i, \pi_{i+1}, \dots, \pi_j}$ and $\pi \doteq \pi_{0:T-1}$.
The return at time step $t$ is defined as 
$
  G_t \doteq \sum_{i={t+1}}^T R_i,
$
which allows defining the state- and action-value functions as
$
v_{\pi, t}(s) \doteq \E_{\pi}\left[G_t \mid S_t = s\right]$ and $
q_{\pi, t}(s, a) \doteq \E_{\pi}\left[G_t \mid S_t = s, A_t = a\right].
$
We use the total rewards performance metric \citep{puterman2014markov} to measure the performance of the policy $\pi$,
which is defined as 
$J(\pi) \doteq \sum_s p_0(s) v_{\pi, 0}(s)$.
In this paper, we focus on Monte Carlo
methods introduced by \citet{kakutani1945markoff} to estimate the total rewards $ J(\pi)$. 
Among its variants, 
the most straightforward and widely used way is to draw samples of $J(\pi)$ by executing the policy $\pi$ online.
As the number of samples increases,
the empirical average of the sampled returns converges to $J(\pi)$.
This idea is called on-policy learning (\citealt{sutton1988learning}) because it estimates a policy $\pi$ by executing itself.

From now on, we consider off-policy learning,
where we estimate the total rewards $J(\pi)$ of an interested policy $\pi$,
called the target policy,
by executing a different policy $\mu$,
called the behavior policy. 
In off-policy learning, each trajectory
$
\textstyle \qty{S_0, A_0, R_1, S_1, A_1, R_2, \dots, S_{T-1}, A_{T-1}, R_T}
$
is generated by a behavior policy $\mu$ with
$
  S_0 \sim p_0, A_{t} \sim \mu_{t}(\cdot | S_{t}), \, t  \in [T-1].
$
Let 
$$
  \tau^{\mu_{t:T-1}}_{t:T-1} \doteq \qty{S_t, A_t, R_{t+1}, \dots, S_{T-1}, A_{T-1}, R_{T}}
$$
be a shorthand for a segment of a random trajectory generated by the behavior policy $\mu$ from the time step $t$ to the time step $T-1$ inclusively.  
In off-policy learning, we use the importance sampling ratio to reweight rewards collected by $\mu$ in order to give an estimate of $J(\pi)$. 
The importance sampling ratio at time step $t$ is defined as
$
\textstyle \rho_t \doteq \frac{\pi_t(A_t \mid S_t)}{\mu_t(A_t \mid S_t)}.
$
The product of importance sampling ratios from time $t$ to $t' \geq t$ is defined as 
$
\textstyle \rho_{t:t'} \doteq \prod_{k=t}^{t'} \frac{\pi_k(A_k | S_k)}{\mu_k(A_k | S_k)}.
$
There are various ways to use the importance sampling ratios in off-policy learning \citep{geweke1988antithetic, hesterberg1995weighted, koller2009probabilistic,thomas2015safe}.
%
We start with the per-decision importance sampling estimator (PDIS, \citet{precup:2000:eto:645529.658134}) in this work and leave the investigation of others for future work.
The PDIS Monte Carlo estimator is defined as  
\begin{align}
  \label{eq pdis def}
\textstyle \pdisg(\tau^{\mu_{t:T-1}}_{t:T-1}) \doteq \sum_{k=t}^{T-1} \rho_{t:k} R_{k+1}
\end{align}
and is unbiased
for any behavior policy $\mu$  that covers target policy $\pi$ \citep{precup:2000:eto:645529.658134}. 
In other words,
when $\forall s$, $\forall a$,
$\mu_t(a|s) = 0 \implies \pi_t(a|s)=0$, we have $\forall t$, $\forall s$,
$$\E[ \pdisg(\tau^{\mu_{t:T-1}}_{t:T-1}) \mid S_t = s ]  = v_{\pi,t}(s).$$
We intensively use the recursive form of the PDIS estimator:
\begin{align}\label{eq:PDIS-recursive}
&\pdisg(\tau^{\mu_{t:T-1}}_{t:T-1}) \\
=&\begin{cases}
\rho_t \left(R_{t+1} + \pdisg(\tau^{\mu_{t+1:T-1}}_{t+1:T-1})\right) & t \in [T-2], \\
\rho_tR_{t+1} & t = T-1.
\end{cases}
\end{align}
Since the PDIS estimator is unbiased,
reducing its variance is sufficient for improving its sample efficiency.
We achieve this variance reduction by designing and learning proper behavior policies.

\section{Variance Reduction in Statistics} \label{sec:var-stats}

In this section, 
we provide the mathematical foundation for variance reduction with importance sampling ratios.
The notations here are independent of the rest of this paper. 
We use similar notations only for easy interpretation in later sections.
Consider a discrete random variable $A$ taking values from a finite space $\fA$ according to a probability mass function $\pi:\fA \to [0,1]$
and a function $q:\fA \to \R$ mapping a value in $\fA$ to a real number. 
We are interested in estimating 
$\E_{A\sim \pi}[q(A)]$.
The ordinary Monte Carlo methods then sample $\qty{A_1, \dots, A_N}$ from $\pi$ and use the empirical average
$\frac{1}{N}\sum_{i=1}^N q(A_i)$
as the estimate.
In statistics, importance sampling is introduced as a variance reduction technique for Monte Carlo methods (\citealt{Rubinstein1981Simulation}). 
The main idea is 
to sample $\qty{A_i, \dots, A_N}$ from a different distribution $\mu$
and use 
 $\frac{1}{N}\sum_{i=1}^N \rho(A_i) q(A_i)$
as the estimate,
where 
$
\textstyle  \rho(A) \doteq \frac{\pi(A)}{\mu(A)}
$
is the importance sampling ratio.
Assuming $\mu$ covers $\pi$, i.e.,
\begin{align}
\label{eq stats converage}
\forall a, \mu(a) = 0 \implies \pi(a) = 0,    
\end{align}
the importance sampling ratio weighted empirical average is then unbiased, i.e.,
$$\E_{A\sim \pi}[q(A)] = \E_{A\sim \mu}[\rho(A)q(A)].$$
If the sampling distribution $\mu$ is carefully designed, 
the variance can also be reduced. 
To adapt this idea for RL, we relax the condition \eqref{eq stats converage} in this section. We formulate 
this problem of searching a variance-reducing sampling distribution as an optimization problem:
\begin{align} 
\text{min}_{\mu \in \Lambda_+}  \quad & 
\V_{A\sim \mu}(\rho(A)q(A)) \label{eq:math-optimization1}.
\end{align}
Here $\Lambda_+$ denotes the set of all the policies that give unbiased estimations, i.e.,
\begin{align}
\!\!\!  \Lambda_+ \doteq \qty{\mu \in \Delta(\fA) \mid \E_{A\sim\mu}\left[\rho(A)q(A)\right] = \E_{A\sim\pi}\left[q(A)\right]},
\end{align}
where $\Delta(\fX)$ denotes the set of all probability distributions on the set $\fX$.
Solving~\eqref{eq:math-optimization1} is actually very challenging. 
To see this,
consider a concrete example where $\fA = \qty{a_1, a_2, a_3}$ and
\begin{align}
\textstyle
  \label{eq stats example}
  \begin{cases}
    &q(a_1) = -10 \\
    &q(a_2) = 2 \\
    &q(a_3) = 2 
  \end{cases}, \quad
  \begin{cases}
    &\pi(a_1) = 0.1 \\
    &\pi(a_2) = 0.5 \\
    &\pi(a_3) = 0.4   
  \end{cases}, \quad
  \begin{cases}
    &\mu(a_1) = 0 \\
    &\mu(a_2) = 0 \\
    &\mu(a_3) = 1   
  \end{cases}.
\end{align}
It can be computed that 
$\E_{A\sim\pi}\left[q(A)\right] = 0.8$ and $\E_{A\sim\mu}\left[\rho(A)q(A)\right] = 0.8$.
In other words,
we could sample $A$ from $\mu$ and use $\rho(A)q(A)$ as an estimator.
This estimator is unbiased.
But apparently, this $\mu$ does not cover $\pi$.
Moreover, since  $\mu$ is deterministic,
the variance of this estimator is 0.
Then $\mu$ is an optimal sampling distribution.
However,
 $\mu$ is hand-crafted based on the knowledge that $q(a_1)\pi(a_1) + q(a_2)\pi(a_2) = 0$.
Without such knowledge,
we argue that there is little hope to find this $\mu$.
This example suggests that searching over the entire $\Lambda_+$ might be too ambitious.
One natural choice presented by \citet{Rubinstein1981Simulation} is to restrict the search to
\begin{align}
  \Lambda_- \doteq \qty{\mu \in \Delta(\fA) \mid \forall a, \mu(a) = 0 \implies \pi(a) = 0}.  \label{eq stats search space small}
\end{align}
In other words,
we aim to find a variance-minimizing sampling distribution among all distributions that cover $\pi$.
Because coverage implies unbiasedness,
we have $\Lambda_- \subseteq \Lambda_+$.
In this work,
we enlarge $\Lambda_-$ to $\Lambda$ defined as 
\begin{align}
\!\!\!\!\!\! \Lambda \doteq \qty{\mu \in \Delta(\fA) \mid \forall a, \mu(a) = 0 \implies \pi(a)q(a) = 0}.
  \label{eq stats search space}
\end{align}
following \citet{mcbook}.
The space $\Lambda$ weakens the assumption in \eqref{eq stats search space small}.  \citet{mcbook} proves that any distribution $\mu$ in $\Lambda$ gives unbiased estimation,
though $\mu$ may not cover $\pi$.
\begin{lemma}
\label{lem stats unbiasedness}
$\forall \mu \in \Lambda, \E_{A\sim\mu}\left[\rho(A)q(A)\right] = \E_{A\sim\pi}\left[q(A)\right].$
\end{lemma}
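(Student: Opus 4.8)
The plan is to evaluate both sides directly by expanding the expectations as finite sums over $\fA$ and then matching them term by term. The only genuine subtlety is that the ratio $\rho(a) = \pi(a)/\mu(a)$ is undefined wherever $\mu(a) = 0$, so I would first make sure the left-hand expectation never actually invokes those problematic terms.

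First I would write
\begin{align}
  \E_{A\sim\mu}\left[\rho(A)q(A)\right] = \sum_{a \in \fA} \mu(a)\,\rho(a)\,q(a) = \sum_{a:\, \mu(a) > 0} \mu(a)\,\frac{\pi(a)}{\mu(a)}\,q(a),
\end{align}
where restricting the sum to the support $\qty{a : \mu(a) > 0}$ is legitimate because any $a$ with $\mu(a) = 0$ contributes weight zero to an expectation under $\mu$; this is precisely the step that sidesteps the $0/0$ ambiguity in $\rho$. On the support the factor $\mu(a)$ cancels, leaving $\sum_{a:\, \mu(a) > 0} \pi(a) q(a)$.

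Next I would compare this to the target quantity $\E_{A\sim\pi}\left[q(A)\right] = \sum_{a \in \fA} \pi(a) q(a)$, splitting the full sum according to whether $\mu(a) > 0$ or $\mu(a) = 0$:
\begin{align}
  \E_{A\sim\pi}\left[q(A)\right] = \sum_{a:\, \mu(a) > 0} \pi(a) q(a) \;+\; \sum_{a:\, \mu(a) = 0} \pi(a) q(a).
\end{align}
The first sum is exactly the expression obtained for $\E_{A\sim\mu}\left[\rho(A)q(A)\right]$, so the two expectations agree if and only if the second sum vanishes. This is where the defining property of $\Lambda$ enters: by~\eqref{eq stats search space}, $\mu \in \Lambda$ means $\mu(a) = 0 \implies \pi(a)q(a) = 0$, so every summand in $\sum_{a:\, \mu(a)=0}\pi(a)q(a)$ is zero. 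Hence the residual term drops out and the two sides coincide.

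I do not expect any real obstacle here; the argument is a short bookkeeping computation. The one point demanding care, rather than difficulty, is the handling of $\rho$ on the set where $\mu$ fails to cover $\pi$: the enlargement from $\Lambda_-$ to $\Lambda$ permits $\mu(a)=0$ at points with $\pi(a)>0$ (provided $q(a)=0$ there), so I must be explicit that the left-hand side is a sum over the support of $\mu$ only, and that unbiasedness is recovered because those uncovered points carry zero contribution to $\E_{A\sim\pi}[q(A)]$ as well.
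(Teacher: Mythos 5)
Your proof is correct and follows essentially the same route as the paper's: restrict the $\mu$-expectation to the support of $\mu$, cancel the ratio there, and use the defining property of $\Lambda$ (that $\mu(a)=0 \implies \pi(a)q(a)=0$) to account for the uncovered actions. The only cosmetic difference is that you split $\E_{A\sim\pi}[q(A)]$ and show the residual sum vanishes, while the paper adds the (zero) residual sum back into the chain of equalities -- these are the same argument.
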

For completeness, its proof is in Appendix~\ref{sec proof lem stats unbiasedness}.
We now consider the variance minimization problem on $\Lambda$, i.e.,
\begin{align} 
\text{min}_{\mu \in \Lambda}  \quad & 
\V_{A\sim \mu}(\rho(A)q(A)) \label{eq:math-optimization}.
\end{align}
The following lemma from \citet{mcbook} gives an optimal solution $\mu^*$ to the optimization problem~\eqref{eq:math-optimization}.
\begin{lemma}\label{lem:math-optimal}
Define $\mu^*(a) \propto \pi(a)\abs{q(a)}$.
Then $\mu^*$ is an optimal solution to \eqref{eq:math-optimization}.
\end{lemma}
For completeness, its proof is detailed in Appendix~\ref{append:math-optimal}. 
Here by $$\mu(a) \propto \pi(a) w(a)$$ with some non-negative $w(a)$, 
we mean
\begin{align}
\textstyle \mu(a) \doteq \pi(a) w(a) / \sum_b \pi(b) w(b).
\end{align}
The reader may notice that if $\pi(a)w(a) = 0$ for all $a$,
the above ``reweighted'' distribution is not well defined.
We then use the convention to interpret $\mu(a)$ as a uniform distribution, i.e., $\mu(a) = 1/\na$.
\emph{We adopt this convention in using $\propto$ in the rest of the paper to simplify the presentation.}
The following lemma gives intuition on the optimality of $\mu^*$,
whose proof is in Appendix~\ref{append:math-variance-0}.
\begin{lemma}\label{lem:math-variance-0}
If $\forall a \in \fA, q(a) \geq 0$ or $\forall a \in \fA, q(a) \leq 0$, 
then $\Lambda = \Lambda_+$,
and
the $\mu^*$ defined in Lemma~\ref{lem:math-optimal} gives a zero variance,
i.e.,
$\V_{A\sim \mu^*}(\rho(A)q(A))  = 0.$
\end{lemma}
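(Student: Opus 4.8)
The plan is to prove the two assertions separately, since they are logically distinct. For the set identity $\Lambda = \Lambda_+$, note that Lemma~\ref{lem stats unbiasedness} already supplies the inclusion $\Lambda \subseteq \Lambda_+$ \emph{without} any assumption on the sign of $q$, so the only thing that remains is the reverse inclusion $\Lambda_+ \subseteq \Lambda$. This is exactly where the constant-sign hypothesis on $q$ will be used, and it is what separates this lemma from the general situation illustrated by the counterexample~\eqref{eq stats example}, in which cancellation between terms of opposite sign let $\Lambda_+$ be strictly larger than $\Lambda$.

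First I would rewrite the unbiasedness constraint in a form that isolates the terms annihilated by $\mu$. For any $\mu \in \Delta(\fA)$, expanding the expectation over the support of $\mu$ gives
\begin{align}
\E_{A\sim\mu}[\rho(A)q(A)] = \sum_{a:\, \mu(a) > 0} \mu(a)\frac{\pi(a)}{\mu(a)}q(a) = \sum_{a:\, \mu(a) > 0} \pi(a)q(a),
\end{align}
so that, comparing with $\E_{A\sim\pi}[q(A)] = \sum_a \pi(a)q(a)$, membership $\mu \in \Lambda_+$ is equivalent to $\sum_{a:\, \mu(a) = 0} \pi(a)q(a) = 0$. Now I would invoke the hypothesis: when $q(a) \geq 0$ for all $a$ (resp.\ $q(a)\leq 0$ for all $a$), every summand $\pi(a)q(a)$ is nonnegative (resp.\ nonpositive), so a vanishing sum forces each summand to vanish, i.e.\ $\mu(a) = 0 \implies \pi(a)q(a) = 0$. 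This is precisely the defining condition of $\Lambda$, yielding $\Lambda_+ \subseteq \Lambda$ and hence $\Lambda = \Lambda_+$.

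For the zero-variance claim I would show that $\rho(\cdot)q(\cdot)$ is constant $\mu^*$-almost surely, mirroring the computation preceding the lemma but tracking signs. In the degenerate branch, where $\pi(a)q(a) = 0$ for every $a$, the distribution $\mu^*$ is uniform, so $\rho(a)q(a) = |\fA|\,\pi(a)q(a) = 0$ on its (full) support and the variance is trivially $0$. In the main branch $\mu^*(a) = \pi(a)\abs{q(a)}/c$ with $c \doteq \sum_b \pi(b)\abs{q(b)} > 0$; for any $a$ in the support of $\mu^*$ we have $\pi(a) > 0$ and $q(a) \neq 0$, so
\begin{align}
\rho(a)q(a) = \frac{\pi(a)}{\mu^*(a)}q(a) = \frac{c}{\abs{q(a)}}q(a),
\end{align}
which equals the constant $c$ when $q \geq 0$ everywhere and the constant $-c$ when $q \leq 0$ everywhere. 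In either case $\rho(\cdot)q(\cdot)$ takes a single value on the support of $\mu^*$, so $\V_{A\sim\mu^*}(\rho(A)q(A)) = 0$.

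The algebra is routine; the one genuinely delicate step is the reverse inclusion $\Lambda_+ \subseteq \Lambda$, since it is the sole place the sign condition enters. I would therefore present the ``nonnegativity forces each term to be zero'' argument as the heart of the proof, and treat the variance computation as a sign-careful verification of the intuition already developed in the constant-function discussion that precedes the statement.
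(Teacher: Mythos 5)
Your proof is correct and follows essentially the same route as the paper's: Lemma~\ref{lem stats unbiasedness} for $\Lambda \subseteq \Lambda_+$, the same-sign-forces-each-summand-to-vanish argument for the reverse inclusion $\Lambda_+ \subseteq \Lambda$, and the direct computation showing $\rho(\cdot)q(\cdot)$ is constant under $\mu^*$ in both the uniform and the non-degenerate branch. Your write-up is marginally more careful than the paper's in restricting the ratio computation to the support of $\mu^*$ and in explicitly treating the $q \leq 0$ case (yielding the constant $-c$), which the paper omits as ``similar.''
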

An optimal sampling distribution proportional to $\pi(a)\abs{q(a)}$ dates back to \citet{kahn1953methods, Rubinstein1981Simulation, Benjamin1998Simulation} and is commonly used in RL \citep{carpentier2015adaptive,mukherjee2022revar}.
We, however, make two remarks.
\textbf{First}, 
we show such a sampling distribution can be suboptimal in $\Lambda_{+}$.
For~\eqref{eq stats example}, such a sampling distribution incurs strictly positive variance, 
but $\mu$ in~\eqref{eq stats example} has a zero variance and is also unbiased.
\textbf{Second}, different from existing literature in RL \citep{carpentier2015adaptive,sutton2018reinforcement, mukherjee2022revar}, our $\mu^*$ defined in Lemma~\ref{lem:math-optimal} does not need to cover $\pi$.
Nevertheless, we note that Lemma~\ref{lem stats unbiasedness} still ensures that $\mu^*$ gives unbiased estimation \citep{mcbook}
and extend unbiasedness to RL settings in Theorem~\ref{lem rl pdis unbaised}.
\section{Variance Reduction in Reinforcement Learning} \label{sec:variance}
We now apply the techniques in Section~\ref{sec:var-stats} in RL.
In particular, 
we seek to reduce the variance $\V\left(\pdisg(\tau^{\mu_{0:T-1}}_{0:T-1})\right)$ 
by designing a proper behavior policy $\mu$.
Of course, 
we need to ensure that the PDIS estimator with this behavior policy is unbiased.
In other words,
ideally
we should search over 
\begin{align}
  \Lambda_+ \doteq \qty{\mu \in \Delta(\fA)^T \mid \E\left[\pdisg(\tau^{\mu_{0:T-1}}_{0:T-1})\right] = J(\pi)}.
\end{align}
As discussed in Section~\ref{sec:var-stats},
this is too ambitious without domain-specific knowledge.
Instead, 
we can search over all policies that cover $\pi$, i.e.,
\begin{align}
\Lambda_- \doteq 
\{& \mu \mid
\forall t, s, a, \mu_t(a|s) = 0 \implies \pi_t(a|s) = 0\}.
\end{align}
The set $\Lambda_-$ contains all policies that satisfy the policy coverage constraint in off-policy learning (\citealt{sutton2018reinforcement}).
Similar to~\eqref{eq stats search space},
we can also enlarge $\Lambda_-$ to 
\begin{align}
  \!\!\!\Lambda \doteq& \{\mu \mid \forall t, s, a, \mu_t(a|s) = 0 \implies 
  \pi_t(a|s)q_{\pi, t}(s, a) = 0  \}.
\end{align}
The following theorem ensures the desired unbiasedness,
which is proved in Appendix~\ref{sec lem rl pdis unbaised}.
\begin{theorem}[Unbiasedness]
\label{lem rl pdis unbaised}
$\forall \mu \in \Lambda$, $\forall  t$, $\forall  s$, \\
$\E\left[\pdisg(\tau^{\mu_{t:T-1}}_{t:T-1}) \mid S_t = s\right] = v_{\pi, t}(s)$.
\end{theorem}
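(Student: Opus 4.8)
The plan is to prove the statement by backward induction on $t$, running from the terminal step $t = T-1$ down to $t = 0$, and using the recursive expression~\eqref{eq:PDIS-recursive} of the PDIS estimator together with the one-step Bellman relationship $q_{\pi, t}(s, a) = r(s, a) + \sum_{s'} p(s' | s, a) v_{\pi, t+1}(s')$ and the identity~\eqref{eq q v relationship} linking $v_{\pi, t}$ and $q_{\pi, t}$. At each step the inductive hypothesis will supply $\E[\pdisg(\tau^{\mu_{t+1:T-1}}_{t+1:T-1}) | S_{t+1} = s'] = v_{\pi, t+1}(s')$, and the Markov property will let me collapse the conditioning on $(S_t, A_t)$ down to $S_{t+1}$.

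For the base case $t = T-1$, I would write $\E[\rho_{T-1} R_T | S_{T-1} = s]$ as a sum over actions, crucially restricting the sum to those $a$ with $\mu_{T-1}(a | s) > 0$ (the only actions with positive probability under $\mu$, and the only ones for which $\rho_{T-1}$ is defined). After cancelling $\mu_{T-1}(a | s)$ against the denominator of $\rho_{T-1}$, this collapses to $\sum_{a:\, \mu_{T-1}(a|s) > 0} \pi_{T-1}(a | s)\, q_{\pi, T-1}(s, a)$, using $q_{\pi, T-1}(s, a) = r(s, a)$. For the inductive step I would expand $\E[\rho_t (R_{t+1} + \pdisg(\tau^{\mu_{t+1:T-1}}_{t+1:T-1})) | S_t = s]$ the same way, condition on $A_t$ and $S_{t+1}$, apply the hypothesis to turn the tail into $v_{\pi, t+1}$, and recognize the bracketed term as $q_{\pi, t}(s, a)$, again arriving at $\sum_{a:\, \mu_t(a|s) > 0} \pi_t(a | s)\, q_{\pi, t}(s, a)$.

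In both cases the remaining gap to the target $v_{\pi, t}(s) = \sum_a \pi_t(a | s)\, q_{\pi, t}(s, a)$ is exactly the contribution of the actions $a$ with $\mu_t(a | s) = 0$, namely $\sum_{a:\, \mu_t(a|s) = 0} \pi_t(a | s)\, q_{\pi, t}(s, a)$. The definition of $\Lambda$ guarantees that each such summand satisfies $\pi_t(a | s)\, q_{\pi, t}(s, a) = 0$, so this gap vanishes term by term. This is precisely the reinforcement-learning analogue of Lemma~\ref{lem stats unbiasedness}, and it is the conceptual heart of the argument.

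I expect the main obstacle to be handling the relaxed coverage condition cleanly, since under $\mu \in \Lambda$ the ratio $\rho_t$ is undefined at actions where $\mu_t(a|s) = 0$ while $\pi_t(a|s) > 0$. The care needed is to never write $\rho_t$ for such actions, i.e.\ to keep every expectation as a sum explicitly restricted to $\{a : \mu_t(a|s) > 0\}$, and then to separately account for the omitted mass using the $\Lambda$ constraint rather than a coverage assumption. The rest, namely the Markov collapse of the conditioning and the Bellman rewriting, is routine once this bookkeeping is set up correctly.
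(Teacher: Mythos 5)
Your proposal is correct and follows essentially the same route as the paper: backward induction on $t$ using the recursive PDIS expression~\eqref{eq:PDIS-recursive}, the Markov property to collapse the conditioning, the Bellman identity to recognize $q_{\pi,t}$, and the relaxed-coverage condition defining $\Lambda$ to handle actions with $\mu_t(a|s)=0$. The only cosmetic difference is that the paper invokes Lemma~\ref{lem stats unbiasedness} as a black box at the step $\E_{A_t\sim\mu_t}\left[\rho_t q_{\pi,t}(S_t,A_t)\mid S_t\right] = \E_{A_t\sim\pi_t}\left[q_{\pi,t}(S_t,A_t)\mid S_t\right]$, whereas you re-derive that same argument inline via explicitly restricted sums --- which you yourself identify as the analogue of that lemma.
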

One immediate consequence of Theorem~\ref{lem rl pdis unbaised} is that 
$
  \forall \mu \in \Lambda, \E\left[\pdisg(\tau^{\mu_{0:T-1}}_{0:T-1})\right] = J(\pi).
$
In this paper, 
we consider a set $\Lambda^*$ such that $\Lambda_- \subseteq \Lambda^* \subseteq \Lambda$.  This $\Lambda^*$ inherits the unbiasedness property of $\Lambda$ and is less restrictive than $\Lambda_-$,
the classical search space of behavior policies.
This $\Lambda^*$ will be defined shortly.
We now formulate our problem as 
\begin{align}
  \label{eq rl opt problem}
  \textstyle \min_{\mu \in \Lambda^*} \quad \V\left(\pdisg(\tau^{\mu_{0:T-1}}_{0:T-1})\right).
\end{align}
By the law of total variance,
for any $\mu \in \Lambda^*$,
we decompose the variance of the PDIS estimator as 
\begin{align}
\label{eq:varaince-1}
&\V\left(\pdisg(\tau^{\mu_{0:T-1}}_{0:T-1})\right) \\
=& \E_{S_0}\left[\V\left(\pdisg(\tau^{\mu_{0:T-1}}_{0:T-1}) \mid S_0\right)\right] \\
&+ \V_{S_0}\left(\E\left[\pdisg(\tau^{\mu_{0:T-1}}_{0:T-1}) \mid S_0\right]\right) \\
=& \E_{S_0}\left[\V\left(\pdisg(\tau^{\mu_{0:T-1}}_{0:T-1}) \mid S_0\right)\right] + \V_{S_0}\left(v_{\pi, 0}(S_0)\right)  \explain{by Theorem~\ref{lem rl pdis unbaised}}. 
\end{align}
The second term $\V_{S_0}\left(v_{\pi, 0}(S_0)\right)$ is a constant given a target policy $\pi$ and is unrelated to the choice of $\mu$. 
In the first term, 
the expectation is taken over $S_0$ that is determined by the initial probability distribution $p_0$. 
Consequently,
to solve the problem~\eqref{eq rl opt problem},
it is sufficient to solve for each $s$,
\begin{align}
  \label{eq rl opt problem2}
  \textstyle \min_{\mu \in \Lambda^*} \quad \V\left(\pdisg(\tau^{\mu_{0:T-1}}_{0:T-1})\mid S_0 = s\right).
\end{align}
Denote the variance of the state value for the next state
given the current state-action pair $(s,a)$ as $\nu_{\pi, t}(s, a)$.
We have $\nu_{\pi, t}(s, a) = 0$ for $t = T-1$ and otherwise
\begin{align}\label{def:nu}
&\!\!\!\!\!\!\!\!\!\!\! \nu_{\pi,t}(s, a) \doteq\V_{S_{t+1}}\left(v_{\pi, t+1}(S_{t+1})\mid S_t=s, A_t=a\right).
%
\end{align}
We now construct a behavior policy $\mu^*$ as
\begin{align}
\label{eq mu star def1}
\mu_t^*(a|s)  \propto \textstyle \pi_{t}(a|s) \sqrt{u_{\pi, t}(s, a)},
\end{align}
where
$u_{\pi, t}(s, a) \doteq q_{\pi, t}^2(s, a)$ for $t=T-1$ and otherwise
\begin{align}\label{eq u def}
&u_{\pi, t}(s, a) =  q_{\pi, t}^2(s, a) + \nu_{\pi, t}(s, a) \\
&+ \textstyle{\sum_{s'} p(s'|s, a)\V\left(\pdisg(\tau^{\mu^*_{t+1:T-1}}_{t+1:T-1}) \mid S_{t+1} = s'\right)}.
\end{align}
Notably,
$\mu^*_t$ and $u_{\pi, t}$ are defined backwards and alternatively,
i.e.,
they are defined in the order of $u_{\pi, T-1}, \mu^*_{T-1}, u_{\pi, T-2}, \mu^*_{T-2}, \dots, u_{\pi, 0}, \mu^*_{0}$.
We prove $\mu^*$ is optimal in the following sense.

\begin{restatable}[Optimal Behavior Policy]{theorem}{restaterloptimal}
\label{lem:rl-optimal}
For any $t$ and $s$,
the behavior policy $\mu_t^*(a|s)$ defined above is an optimal solution to the following problem
\begin{align}\label{eq: all optimal optimization}
\min_{\mu_t \in \Lambda_t, \dots, \mu_{T-1} \in \Lambda_{T-1}} \quad \V\left(\pdisg(\tau^{\mu_{t:T-1}}_{t:T-1})\mid S_t = s\right),
\end{align}
where $\Lambda_t \doteq \{\mu_t \in \Delta(\fA)\mid \forall s, a, \mu_t(a|s) = 0 \implies \\ \pi_t(a|s)u_{\pi, t}(s, a) = 0\}$.
\end{restatable}
Its proof is in Appendix~\ref{append:rl-optima}. We are now ready to define 
$\Lambda^* \doteq \Lambda_0 \times \dots \times \Lambda_{T-1}$.
Theorem~\ref{lem:rl-optimal} indicates that $\mu^*$ achieves optimality for the optimization problem
\eqref{eq rl opt problem2}.
Since $u_{\pi, t}(s, a) = 0 \implies q_{\pi, t}(s, a) = 0$ by the non-negativity of the summands in~\eqref{eq u def},
we have $\Lambda^* \subseteq \Lambda$.
If $\mu_t(a|s) = 0 \implies \pi_t(a|s) = 0$,
it follows immediately that $\mu_t(a|s) = 0 \implies \pi_t(a|s)u_{\pi, t}(s, a) = 0$.
This indicates $\Lambda_- \subseteq \Lambda^*$.
This means that
the set of policies $\Lambda^*$ considered in Theorem~\ref{lem:rl-optimal} 
are unbiased and
includes at least all the policies that cover the target policy,
which is the classical behavior policy search space $\Lambda_-$ \citep{precup:2000:eto:645529.658134,maei2011gradient,sutton2016emphatic,zhang2022thesis}.

Unfortunately,
empirically implementing $\mu^*_t$ requires knowledge of $u_{\pi, t}$ \eqref{eq u def} that contains the transition function $p$. 
Approximating the transition function is very challenging in MDPs with large stochasticity and function approximation (cf. model-based RL \citep{sutton1990integrated,sutton2012dyna,deisenroth2011pilco,chua2018deep}).  
Thus, we seek to build another policy $\hat{\mu}$ that can be easily implemented without direct knowledge of the transition function $p$ (cf. model-free RL \citep{sutton1988learning,watkins1989learning}).

\emph{We achieve this by aiming at one-step optimality instead of global optimality.}
We try to find the best $\mu_t$
assuming in the future we follow $\pi_{t+1}, \dots, \pi_{T-1}$, instead of $\mu^*_{t+1}, \dots, \mu^*_{T-1}$.
We refer to this one-step optimal behavior policy as $\hat \mu_t$.
Similarly,
to define optimality, we first need to specify the set of policies we are concerned about.
To this end, we define 
\begin{align}\label{eq def q hat last step}
\hat q_{\pi, t}(s, a) \doteq q_{\pi, t}^2(s, a) 
\end{align}
for $t = T-1$ and otherwise
\begin{align}\label{eq def q hat}
&\hat q_{\pi, t}(s, a) \doteq q_{\pi, t}^2(s, a) + \nu_{\pi,t}(s, a) \\
&\textstyle +\sum_{s'} p(s'|s, a)\V\left(\pdisg(\tau^{\pi_{t+1:T-1}}_{t+1:T-1}) \mid S_{t+1} = s'\right).
\end{align}
Notably, 
$\hat q_{\pi, t}(s,a)$ is always non-negative since all the summands are non-negative.
Accordingly,
we define for $t \in [T-1]$,
$
\hat \Lambda_t \doteq \{\mu_t \in \Delta(\fA) \mid \forall s, a, \mu_t(a|s) = 0 \implies \pi_t(a|s)\hat q_{\pi, t}(s, a) = 0\}.
$
Comparing~\eqref{eq u def} and~\eqref{eq def q hat},
the optimality of $\mu^*$ implies that $\forall s, a, t$, 
we have
$\hat q_{\pi, t}(s, a) \geq u_{\pi, t}(s, a) \geq 0$.
As a result, if $\mu_t \in \hat \Lambda_t$,
we have
\begin{align}
  \mu_t(a|s) = 0 &\implies \pi_t(a|s)\hat q_{\pi, t}(a|s) = 0 \\
  &\implies \pi_t(a|s)u_{\pi, t}(a|s) = 0,
\end{align}
indicating $\mu_t \in \Lambda_t$.
In other words, we have $\hat \Lambda_t \subseteq \Lambda_t$.
To search for $\hat \mu_{0:T-1}$,
we work on
  $\hat \Lambda \doteq \hat \Lambda_0 \times \dots \times \hat \Lambda_{T-1}$.
To summarize,
we have $\Lambda_- \subseteq \hat \Lambda \subseteq \Lambda^*  \subseteq \Lambda \subseteq \Lambda_+$.
Recall that $\Lambda_+$ is the set of all behavior policies such that the corresponding PDIS estimator is unbiased.
$\Lambda$ is a sufficient but not necessary condition to ensure such unbiasedness (Theorem~\ref{lem rl pdis unbaised}).
$\Lambda^*$ is a restriction of $\Lambda$ such that we are able to find an optimal solution.
We restrict $\Lambda^*$ to $\hat \Lambda$,
aiming for a sub-optimal but implementable policy.
$\hat \Lambda$ is still larger than
$\Lambda_-$,
which is the space with the coverage assumption \eqref{eq stats converage} that previous works \citep{precup:2000:eto:645529.658134,maei2011gradient,sutton2016emphatic,sutton2018reinforcement,zhang2022thesis} consider.

After confirming the space of behavior policies, we formulate the optimization problem for designing an efficient behavior policy to achieve one-step optimality as
\begin{align}\label{eq: one step optimization}
\min_{\mu_t \in \hat{\Lambda}_t} \quad \V\left(\pdisg(\tau^\qty{\mu_t,\pi_{t+1},\dots, \pi_{T-1}}_{t:T-1})\mid S_t = s\right).
\end{align}
According to the recursive expression of the variance in Lemma~\ref{lem:recursive-var} in 
Appendix~\ref{append:rl-optima}, we rewrite \eqref{eq: one step optimization} as 
\begin{align}
\min_{\mu_t \in \hat \Lambda_t} & \E_{A_t\sim \mu_t}\left[\rho_t^2 \left(\E_{S_{t+1}}\left[ \V\left(\pdisg(\tau^{{\pi}_{t+1:T-1}}_{t+1:T-1}) \mid S_{t+1}\right) \right. \right. \right.\\ 
& \!\!\!\left. \left. \left. \mid S_t, A_t\right] + \nu_{\pi,t}(S_t, A_t) + q_{\pi, t}^2(S_t, A_t) \right) \mid S_t\right], \label{eq local opt problem}
\end{align}
where the objective can be further simplified as
\begin{align}
&\E_{A_t\sim \mu_t}\left[\rho_t^2 \left(\E_{S_{t+1}}\left[\V\left(\pdisg(\tau^{{\pi}_{t+1:T-1}}_{t+1:T-1}) \mid S_{t+1}\right) \right. \right.\right.\\
& \quad \left. \left.\left. \mid S_t, A_t\right] + \nu_{\pi,t}(S_t, A_t) + q_{\pi, t}^2(S_t, A_t) \right) \mid S_t\right] \\
=& \E_{A_t\sim\mu_t}\left[\rho_t^2 \hat q_{\pi, t}(S_t, A_t) \mid S_t\right] \explain{By \eqref{eq def q hat}}\\
\textstyle =& \V_{A_t\sim\mu_t}\left(\rho_t \sqrt{\hat q_{\pi, t}(S_t, A_t)} \mid S_t\right) \\
& - \E_{A_t\sim\pi_t}^2\left[\sqrt{\hat q_{\pi, t}(S_t, A_t)} \mid S_t\right] \explain{Lemma~\ref{lem stats unbiasedness} and $\mu_t \in \hat \Lambda_t$}.
\end{align}
Since the second term is unrelated to $\mu_t$, it is equivalent to solving
\begin{align}
  \label{eq:solve-hat-q}
  \min_{\mu_t \in \hat \Lambda_t} \quad \V_{A_t\sim\mu_t}\left(\rho_t \sqrt{\hat q_{\pi, t}(S_t, A_t)} \mid S_t\right).
\end{align}

According to Lemma~\ref{lem:math-optimal},
\begin{align}
\label{def hat mu}
\hat \mu_t(a|s) \propto \textstyle \pi_t(a|s)\sqrt{\hat q_{\pi, t}(s, a)}. 
\end{align}
is an optimal solution to~\eqref{eq local opt problem}.
We now present our main result that $\hat \mu$ \emph{provably} reduces variance.


\begin{restatable}[Variance Reduction]{theorem}{revarsmallerstronger}
\label{lem:var_smaller_stronger}
For any $t$ and $s$,
\begin{align}
&\textstyle \V\left(\pdisg(\tau^{\hat \mu_{t:T-1}}_{t:T-1})\mid S_t = s\right) \\ 
\leq& \V\left(\pdisg(\tau^{\pi_{t:T-1}}_{t:T-1})\mid S_t = s\right) - \epsilon_t(s).
\end{align}
To define $\epsilon_t(s)$, first define $c_t(s)=$
\begin{align}
&\textstyle \sum_a \pi_t(a|s) \hat{q}_{\pi, t}(s, a) - \left(\sum_a \pi_t(a|s) \sqrt{\hat{q}_{\pi, t}(s, a)}\right)^2.
\end{align}
Then we define $\epsilon_t(s) \doteq c_t(s)$ for $t = T-1$ and otherwise
\begin{align}\label{def:epsilon}
\!\!\!\!\!\!\!\!\!\!\! \epsilon_t(s)  \doteq c_t(s) + \E_{A_t \sim \hat \mu_t}\left[\rho_t^2\E_{S_{t+1}}\left[\epsilon_{t+1}(S_{t+1})| s, A_t\right]\right].
\end{align}
\end{restatable}
Its proof is in Appendix~\ref{append:var_smaller_stronger}.
Notably, this $c_t$ is always non-negative by Jensen's inequality,
ensuring the non-negativity of $\epsilon_t$ and thus the variance reduction property. 
Moreover, $c_t(s) = 0$ occurs only when all actions have the same $\hat q_{\pi,t}$ on the state $s$. 
It is reasonable to conjecture that this is rare in practice.
So, $c_t(s)$ is likely to be strictly positive.
This shows 
the variance of the PDIS estimator with $\hat \mu$  at a state $s$ is \emph{provably} smaller than or equal to that with $\pi$,
the straightforward on-policy Monte Carlo estimator, 
by at least $\epsilon_t(s)$. 
The magnitude of $\epsilon_t(s)$ depends on a specific target policy and the environment.
We empirically show the variance reduction is significant in commonly used benchmarks in Section~\ref{sec:experiment}.


\section{Learning Closed-Form Behavior Policies}


We now present efficient algorithms to learn the closed-form behavior policy $\hat \mu$. Despite that $\hat q_{\pi, t}$ in \eqref{eq def q hat} has a complicated definition,
we prove that it has a concise representation.
It is exactly the action value function of the policy $\pi$ with the same transition function $p$ but a different reward function $\hat r$.
\begin{theorem}\label{lem:hat-q-recursive}
Define
\begin{align}\label{def: hat r}
\hat{r}_{\pi,t}(s,a) \doteq 2r(s, a) q_{\pi, t}(s, a)- r^2(s, a).
\end{align}
Then $\hat{q}_{\pi, t}(s, a) = \hat r_{\pi, t}(s, a)$ for $t=T-1$ 
and otherwise
\begin{align}\label{eq: hat q bellman}
\textstyle &\hat{q}_{\pi, t}(s, a) \\
=& \textstyle \hat{r}_{\pi,t}(s,a) +  \sum_{s', a'} p(s'|s, a) \pi_{t+1}(a'|s') \hat{q}_{\pi, t+1}(s', a').
\end{align}
\end{theorem}
Its proof is in Appendix~\ref{append:hat-q-recursive}.
\emph{This observation makes it possible to apply any off-the-shelf offline policy evaluation methods to learn $\hat q$,}
after which the behavior policy $\hat \mu$ can be computed easily with~\eqref{def hat mu}.
For generality,
we consider
the behavior policy agnostic offline learning setting \citep{nachum2019dualdice},
where the offline data in the form of
  $\qty{(t_i,s_i,a_i,r_i,s_i')}_{i=1}^m$
consists of $m$ previously logged data tuples.
In the $i$-th data tuple, 
$t_i$ is the time step, 
$s_i$ is the state at time step $t_i$, 
$a_i$ is the action executed on state $s_i$, 
$r_i$ is the sampled reward, 
and $s_i'$ is the successor state. 
Those tuples can be generated by one or more, known or unknown behavior policies.
Those tuples do not need to form a complete trajectory.

\begin{algorithm}
\caption{Offline Data Informed (ODI) algorithm}
\label{alg: ODI algorithm}
\begin{algorithmic}[1]
\STATE {\bfseries Input:} Estimators $r(s,a)$, $q_{\pi,t}(s,a)$, $\hat{q}_{\pi,t}(s,a)$, \\
a target policy $\pi$, \\
an offline dataset $\mathcal{D} = \qty{(t_i,s_i,a_i,r_i,s_i)}_{i=1}^m$
\STATE {\bfseries Output:} a behavior policy $\hat{\mu}$
\STATE Approximate $r$ from $\mathcal{D}$ using supervised learning
\STATE Approximate $q_{\pi,t}$ from $\mathcal{D}$ using any offline RL method (e.g. Fitted Q-Evaluation)
\STATE Compute $\hat{r}_i$ by \eqref{def: hat r} for each data pair in $\mathcal{D}$
\STATE Construct $\mathcal{D}_{\hat{r}} \doteq \qty{(t_i,s_i,a_i,\hat{r}_i,s_i)}_{i=1}^m$ by plugging $\hat{r_i}$ into $\mathcal{D}$
\STATE Approximate $\hat{q}_{\pi,t}$ from  $\mathcal{D}_{\hat{r}}$ by \eqref{eq: hat q bellman} using any offline RL method (e.g. Fitted Q-Evaluation) 
\STATE \textbf{Return:} $\hat \mu_t(a|s) \propto \pi_t(a|s)\sqrt{\hat{q}_{\pi, t}(s, a)}$\\
\end{algorithmic}
\end{algorithm}

In this paper, 
we choose 
Fitted $Q$-Evaluation (FQE, \citet{le2019batch}) as a demonstration, but our framework is ready to incorporate any state-of-the-art offline policy evaluation methods
 to approximate $\hat q$.
To learn $\hat r$,
it is sufficient to learn $r$ and $q$.
FQE can be used to learn $q$, and learning $r$ is a simple regression problem.
FQE is then invoked again w.r.t. the learned $\hat r$ to learn an approximation of $\hat q$.
We refer the reader to Algorithm \ref{alg: ODI algorithm} for a detailed exposition of our algorithm. 
We split the offline data into training sets and test sets to tune all the hyperparameters offline
in Algorithm \ref{alg: ODI algorithm},
based on 
the supervised learning loss or the FQE loss on the test set.
We remark that FQE loss on the test set is known to be an inaccurate signal \citep{fujimoto2022should} so our $\hat q$ estimation would be poorly tuned in this sense.
We, however, notice that even with such a poorly tuned estimation,
the variance reduction in the tested environments is still significant.
This suggests that $\epsilon_t(s)$ in Theorem~\ref{lem:var_smaller_stronger} is likely to be large and demonstrates the robustness of our approach.
Since $\hat q_{\pi, t}(s, a)$ is proved to be always non-negative (cf. \eqref{eq def q hat}),
we use positive function class for FQE in approximating $\hat q$,
e.g.,
a neural network with softplus as the last activation function.

In the following, we theoretically analyze how the error in approximating $\hat q$ affects the amount of reduced variance in Theorem~\ref{lem:var_smaller_stronger}.
We assume $\hat q_{\pi, t}(s, a)$ is not only non-negative but also positive.
Given its non-negative summands in~\eqref{eq def q hat},
we argue that this positivity assumption is not restrictive at all.
We use $q^+_{\pi, t}(s, a) > 0$ to denote our approximation to $\hat q_{\pi, t}(s, a)$.
The approximation error can then be captured by
\begin{align}\label{def: eta}
\eta_{\pi, t}(s, a) \doteq \hat q_{\pi, t}^+(s, a) / \hat q_{\pi, t}(s, a) > 0.   
\end{align}

If $\eta_{\pi, t}(s, a)$ is $1$, there is no approximation error for $(s, a, t)$. 
The actual learned behavior policy is then denoted by 
\begin{align}\label{def: mu +}
\hat \mu_t^+(a|s) \propto \pi_t(a|s)  \textstyle \sqrt{\hat q_{\pi, t}^+(s, a)}.
\end{align}
Then, we generalize Theorem~\ref{lem:var_smaller_stronger} to the following theorem.
\begin{theorem}
\label{lem:error analysis}
For any $t$ and $s$,
\begin{align}
&\textstyle \mathbb{V}(\pdisg(\tau_{t:T-1}^{\hat \mu_{t:T-1}^+})\mid S_t = s) \\
\leq& \mathbb{V}(\pdisg(\tau_{t:T-1}^{\pi_{t:T-1}})\mid S_t = s) - \epsilon^+_t(s).
\end{align} 
To define $\epsilon^+_t(s)$, first define
\begin{align}
c^+_t(s) \doteq&  \textstyle  \sum_a \pi_t(a|s) \hat q_{\pi, t}(s, a) - \\
&\textstyle \left(\sum_a \pi_t(a|S_t)\sqrt{\eta_{\pi, t}(S_t, a)} \sqrt{\hat q_{\pi, t}(S_t, a)}\right) \\ 
&\times \textstyle \left(\sum_a \pi_t(a|S_t) \frac{1}{\sqrt{\eta_{\pi, t}(S_t, a)}} \sqrt{\hat q_{\pi, t}(S_t, a)}\right).
\end{align}
Then we define $\epsilon^+_t(s) \doteq c^+_t(s)$ for $t = T-1$ and otherwise
\begin{align}\label{def: epsilon +}
&\epsilon^+_t(s)\\
\doteq& c^+_t(s) + \mathbb E_{A_t \sim \hat \mu_t^+}[\rho_t^2 \mathbb E_{S_{t+1}}[\epsilon^+_{t+1}(S_{t+1})| s, A_t]].
\end{align}
\end{theorem}
Its proof is in Appendix~\ref{append:error analysis}.
When there is no estimation error,
i.e., $\eta_{\pi, t}(s, a) = 1$, 
$c_t^+$ and $\epsilon_t^+$ reduce to $c_t$ and $\epsilon_t$ in Theorem~\ref{lem:var_smaller_stronger}, which is non-negative by Jensen's inequality. 
As discussed earlier,
it is reasonable to conjecture that $c_t(s)$ is likely to be strictly positive. This leaves room to tolerate estimation errors such that
$c_t^+(s)$ can still be positive even if $\eta_t(s, a) \neq 1$. 
Because the sign of $c^+_t$ only depends on the current $\eta_{\pi,t}$, the estimation error in the future step does not affect current $c_t$. 
Notably, even if some $\epsilon^+_{t+1}(S_{t+1}) < 0$, $\epsilon^+_t(S_t)$ can still be positive. 
This is because $\epsilon^+_t(s)$ depends on the expectation of the $\epsilon^+_{t+1}(S_{t+1})$, not a single value,
and $c^+_t$ can still be positive.
This makes our approach robust to the approximation error. 
\emph{It is important to note that the PDIS estimator with $\hat \mu_t(a|s)$ is always unbiased, 
regardless of the approximation error $\eta$.}

Theorem~\ref{lem:error analysis} makes it straightforward to analyze how the offline data affects the amount of the reduced variance.
For example,
if FQE is used,
one can resort to \citet{munos2003error,antos2008learning,munos2008finite,chen2019information} to connect offline data and the approximation error $\eta$.
Theorem~\ref{lem:error analysis} then directly relays the approximation error to the amount of reduced variance.
We, however, omit such analysis since it deviates from our main contribution.

\section{Related Work}
\label{sec related work}

\begin{table*}[ht]
  \begin{center}
  \begin{small}
  \setlength\tabcolsep{5pt}
\begin{tabular}
{llllll}
\toprule
& MDP & Data to learn $\mu$ & Parameterization of $\pi$ & Gridworld size &  Other environments \\
\midrule
Ours & \textbf{general} & \textbf{offline data} & \tb{no assumption} &  \textbf{27,000} & \textbf{MuJoCo robotics}\\
BPS \citep{hanna2017data} & \tb{general} &  online data & need to be known & 1,600 & CartPole, Acrobot\\
ROS \citep{zhong2022robust}& \tb{general} &  online data & need to be known & 1,600&  CartPole \\
ReVar \citep{mukherjee2022revar}  & tree &  \tb{offline data} & \tb{no assumption} & 1,600 & 15 states tree-MDP \\ 
\bottomrule
\end{tabular}
\end{small}
\end{center}
\caption{
Our methods impose weaker assumptions on the data, and our empirical study covers more challenging tasks.
}
\label{table: compare}
\end{table*}

\tb{Monte Carlo methods.} Reducing the variance of Monte Carlo estimators via learning a proper behavior policy has been explored before.
\citet{hanna2017data} model the problem of finding a variance-reducing behavior policy 
as an optimization problem and thus rely on stochastic gradient descent to update a parameterized behavior policy. 
In particular,
\citet{hanna2017data} consider the ordinary importance sampling.
\emph{By contrast,
we consider the per-decision importance sampling,
which is fundamentally better \citep{precup:2000:eto:645529.658134}.}
Moreover,
\citet{hanna2017data} 
require new online data to learn this behavior policy.
\emph{By contrast,
our method works with offline data and does not need any online data for behavior policy learning.}
\citet{hanna2017data} also require the online data to be complete trajectories.
\emph{By contrast,
our method copes well with incomplete offline tuples.}
\citet{mukherjee2022revar} also investigate variance-reducing behavior policies for the per-decision importance sampling estimator.
Their results,
however,
apply to only tree-structured MDPs,
which is rather restrictive because many MDPs of interest are not tree-structured.
For example,
in  finite horizon MDPs considered in this paper,
if two states at time $t$ can transit to the same successor state at time $t+1$,
then this MDP is not tree-structured.
Moreover,
\citet{mukherjee2022revar} require to directly approximate the transition function of the MDP by counting,
making it essentially a model-based approach.
\citet{mukherjee2022revar}, therefore,
suffer from all canonical challenges in model learning \citep{sutton1990integrated,sutton2012dyna,deisenroth2011pilco,chua2018deep}.
\emph{By contrast,
we work on general MDPs without making any assumption regarding their underlying structures, and we do not need to approximate the transition function.
Our approach is model-free.}
\citet{zhong2022robust} adjust the behavior policy by encouraging under-sampled data.  
Their offline data, however, has to be complete trajectories generated by known policies.
In their experiments,
they also require the policies for generating offline data to be similar to the target policy
since they do not have any importance sampling.
\emph{By contrast,
our method copes well with offline data in the form of incomplete segments from probably unknown behavior policies that can be arbitrarily different from the target policy}.
Moreover,
there is no theoretical guarantee that the estimates made by \citet{zhong2022robust} are unbiased or consistent.
\emph{By contrast,
our estimate is always provably unbiased}.


Other attempts for variance reduction in Monte Carlo evaluation mostly use control variates based on value functions \citep{zinkevich2006optimal,white2009learning,jiang2015doubly}.
Such control variates can be integrated into our estimator,
which we, however, save for future work.
Notably,
our work differs from the doubly robust method in \citet{jiang2015doubly} in that they assume the behavior policy is fixed and given while we use the fact that we have the freedom to choose a behavior policy in many settings.
Moreover,
to account for the stochasticity from the transition function,
they require to learn a model of the MDP accurately,
while we achieve this in a model-free way.
Finally,
they do not confirm a reduced variance compared with the on-policy estimator while we do.

\tb{Model-based offline evaluation.}
One straightforward way to exploit offline data for policy evaluation is to learn a model of the MDP first, 
probably with supervised learning \citep{jiang2015doubly,paduraru2013off,zhang2021autoregressive},
and then execute Monte Carlo methods inside the learned model.
Learning a high-fidelity model is, however, sometimes even more challenging than evaluating the policy itself \citep{li2019perspective}. 
And the model prediction error can easily compound over time steps during model rollouts \citep{wan2019planning}.
\emph{Nevertheless,
if a good model could somehow be learned,
our work still helps reduce the required rollouts when Monte Carlo is applied within the learned model.}

\tb{Model-free offline evaluation.}
Model-free offline evaluation methods rely on learning other quantities for policy evaluation,
including density ratio (a.k.a. marginalized importance sampling ratio, \citet{liu2018breaking,nachum2019dualdice,li2019perspective,xie2019towards,zhang2020gradientdice,mousavi2020blackbox,uehara2019minimax,yang2020off}) and state-action value function \citep{harutyunyan2016q,munos2016safe,farajtabar2018more,le2019batch,precup:2000:eto:645529.658134}.
But those learning processes bring in bias,
either due to the misspecification of the function class or due to the complexity of optimization.
Consequently, the estimation they make is biased, and it is hard to quantify such bias without restrictive assumptions.
\emph{To our knowledge, 
the only practical way in general settings to certify that their estimation is indeed accurate is to compare those estimations with Monte Carlo estimations.}

\begin{table*}[bp]
\begin{center}
\begin{small}
  \setlength\tabcolsep{5pt}
\begin{tabular}
{llllll}
\toprule
 On-policy MC & Ours with $2.3\%$	& Ours with $4.6\%$	& Ours with $18.4\%$  &	BPG	 & ROS \\
  & offline data coverage & offline data coverage &  offline data coverage & & \\
\midrule
	300&	150&	90&	60&	300&	300 \\
	600&	330&	180&	120&	540&	540 \\
	1200&	540&	420&	270&	990&	990 \\
\bottomrule
\end{tabular}
\end{small}
\end{center}
\caption{The above table is an extension of Figure \ref{fig:pseudo_tabular} by adding experiments with $4.6\%/18.4\%$ data coverage for our algorithm in Gridworld with $\text{size}=27,000$. Each number is the number of steps needed to achieve the same estimation accuracy that the naive Monte Carlo achieves with $300/600/1200$ steps. All numbers are averaged from $900$ different runs over a wide range of policies. Standard errors are visualized in Figure \ref{fig:pseudo_tabular} of our paper and are invisible for some algorithm curves because they are too small.
}
\label{table: gridworld scale}
\end{table*}

Furthermore, those learning algorithms also have hyperparameters to tune
 (i.e., model selection),
for which 
most offline RL practitioners (see, e.g., \citet{liu2018breaking,nachum2019dualdice,li2019perspective,xie2019towards,mousavi2020blackbox,uehara2019minimax,yang2020off,zhang2020gradientdice})
usually use Monte Carlo with online data.
The online data comes from either a simulator or a learned model.
As a result,
\emph{our work helps reduce the online data used in model selection by those model-free offline evaluation methods}.

Efforts have been made to perform model selection with only offline data without explicitly learning a model as well \citep{paine2020hyperparameter,kumar2021workflow,xie2021batch,zhang2021towards}.
Those offline model selection methods, however, rarely have a correctness guarantee without restrictive assumptions.
To summarize,
if obtaining online data is entirely impossible, 
existing offline evaluation methods without using any online data
might be the only choices.
These include model-based methods and model-free methods augmented by offline model selection.
However, in many scenarios,
it is practical to assume that a small amount of online data is available.
If, in addition, evaluation correctness should be honored,
then the improved Monte Carlo method in this work might be a better choice.
Using offline data to help online model selection is previously explored by 
\citet{konyushova2021active}.
In particular, they use offline data to decide which policy,
among a given set of policies, should be given priority to evaluate.
When it comes to the actual online evaluation,
\citet{konyushova2021active} still uses the ordinary online Monte Carlo methods.
\emph{\citet{konyushova2021active}, therefore, again benefit from the improved Monte Carlo method in this paper}.

\section{Empirical Results}\label{sec:experiment}
In this section, 
we present empirical results comparing our methods against three baselines:
\tb{(1)} the canonical on-policy Monte Carlo estimator,
\tb{(2)} off-policy Monte Carlo estimator with behavior policy search (BPS, \citet{hanna2017data}),
and \tb{(3)} robust on-policy sampling (ROS, \citet{zhong2022robust}).
We do not implement ReVar \citep{mukherjee2022revar} because it will incur infinite loops if the MDP is not tree-structured. 
Our method first learns a behavior policy with given offline data using Algorithm \ref{alg: ODI algorithm}, then the PDIS Monte Carlo estimator~\eqref{eq pdis def} is used to estimate the performance of the target policy,
where the learned behavior policy is used to interact with the environment.
We call our method Offline Data Informed (ODI) algorithm.
Our implementation is made publicly available to facilitate future research\footnote{
\href{https://github.com/ShuzeLiu/Behavior-Policy-Design-for-Policy-Evaluation}{https://github.com/ShuzeLiu/Behavior-Policy-Design-for-Policy-Evaluation}}.
Our method is superior in data requirements and applicability as summarized 
in Table \ref{table: compare}. 

\textbf{Gridworld:}
We first conduct experiments with linear function approximation in Gridworld   with $n^3$ states,
i.e.,
it is an $n \times n$ grid with the time horizon also being $n$.
Specifically, we use Gridworld with $n^3 = 1,000$ and $n^3 = 27,000$.
We use randomly generated reward functions with $30$ randomly generated target policies.
The offline data is generated by selecting random actions on uniformly random state distribution.
We report the \emph{normalized estimation error} of the four methods against the number of environment interactions (steps).
Intuitively,
this normalized estimation error is the estimation error of an estimator normalized by that of the on-policy Monte Carlo estimator. 
Precisely speaking,
define the \emph{estimation error} at step $t$ as the absolute difference between an estimator and the ground truth divided by the ground truth. 
The \emph{normalized estimation error} is then the estimation error divided by the average estimation error of the on-policy Monte Carlo estimator after the first episode. 
Thus, the normalized estimation error of the on-policy Monte Carlo estimator starts from $1$. 
\begin{figure}[H]
\includegraphics[width=0.45\textwidth]{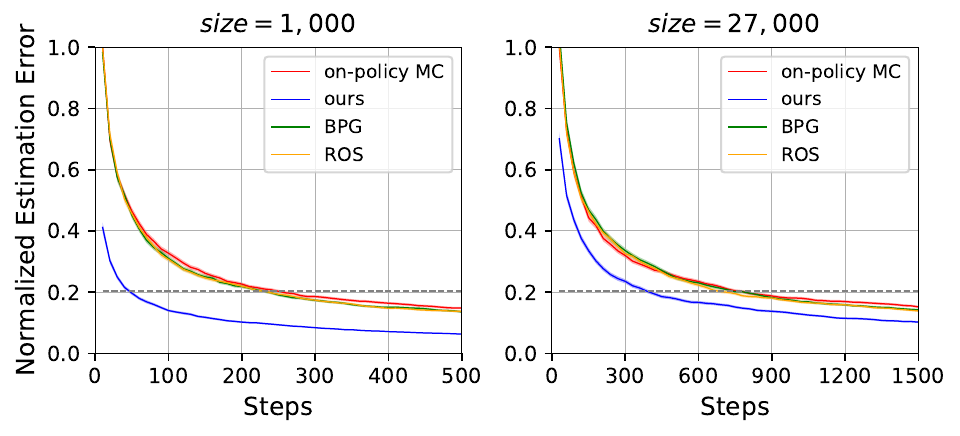}
\centering
\caption{
Results on Gridworld. 
The curves are averaged over 900 trials (30 target policies, each having 30 independent runs). 
The shaded regions denote standard errors and are invisible for some curves because they are too small.
}
\label{fig:pseudo_tabular}
\end{figure}

\begin{figure*}[ht]
  \includegraphics[width=1.02\textwidth]{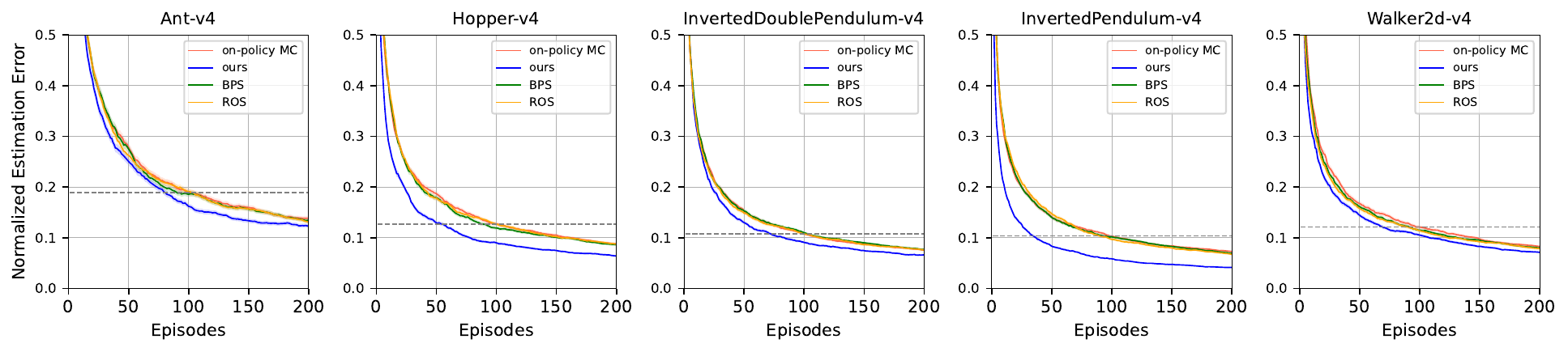}
  \centering
  \caption{
  Results on Mujoco environments. Each curve is averaged over $900$ trials (30 target policies, each having 30 independent runs). The shaded regions denote standard errors and are invisible for some curves because they are too small.
  }
  \label{fig:mujoco}
\end{figure*}

\begin{table*}[t]
\begin{center}
\begin{small}
  \setlength\tabcolsep{5pt}
\begin{tabular}
{llllll}
\toprule
 &On-policy MC & Ours &   BPG   &  ROS & Improvement in Saved Episodes\\
\midrule
Ant   & 100 & \textbf{81} & 91 & 103 & (100-81)/(100-91)$\approx$211.1\%\\
Hopper & 100 & \textbf{54} & 89 & 100 & (100-54)/(100-89)$\approx$418.2\% \\
I. D. Pendulum & 100 & \textbf{72} & 103 & 99 & (100-72)/(100-99)=2800\% \\
I. Pendulum & 100 & \textbf{35} & 95 & 90 &(100-35)/(100-90)=650\% \\
Walker & 100 & \textbf{70} & 92 & 91 &  (100-70)/(100-91)$\approx$333.3\% \\
\bottomrule
\end{tabular}
\end{small}
\end{center}
\caption{Episodes needed to achieve the same of estimation accuracy that on-policy Monte Carlo achieves with $100$ episodes.
}
\label{table: compare num}
\end{table*}

As shown in Figure \ref{fig:pseudo_tabular}, our method outperforms baselines by a large margin. 
In particular,
as shown by the dotted line, in Gridworld with size $1,000$, 
to achieve the same estimation error that the on-policy Monte Carlo estimator achieves with $250$ steps,
our methods only need around $50$ steps.
In Gridworld with size $27,000$, 
to achieve the same estimation error that on-policy Monte Carlo estimator achieves with $750$ steps,
our methods only need around $400$ steps, saving more than $40\%$ of online iteractions.
The improvement in environments with $\text{size}=27,000$ is smaller than  environments with  $\text{size}=1,000$ because the amount of offline data is the same for both environments,
i.e.,
the offline data coverage is worse for the Gridworld with $\text{size}=27,000$.
In fact, the offline data coverage for the Gridworld with $\text{size}=1,000$ and $\text{size}=27,000$ are 62.5\% and 2.3\%, respectively.
More experiment details are in Appendix~\ref{append:gridworld}.

We also show our algorithm scales with offline data. As we increase the data coverage in the Gridwolrd with $\text{size}=27,000$ by adding more offline data generated from many different distributions, our method improves the saved samples from $55\%=(1200-540)/1200$ to $77.5\% = (1200-270)/1200$ in the last row of Table \ref{table: gridworld scale}. By comparison, the best over all previous state-of-the-art algorithms only saves $17.5\% = (1200-990)/1200$ samples and do not have a mechanism to use offline data because they can only utilize online trajectory.


\textbf{MuJoCo:} 
We then conduct experiments with neural network function approximation in MuJoCo \citep{todorov2012mujoco} robot simulation tasks.
Since our methods are designed for discrete action space,
we discretize the MuJoCo action space.
Details about action space discretization, 
target policy generation,
and offline data generation are provided in Appendix~\ref{append:mujoco}.
We report the normalized estimator error in Figure \ref{fig:mujoco},
where our methods are consistently better than baselines.
In particular,
as shown by the dotted line in Figure \ref{fig:mujoco} and Table~\ref{table: compare num},
our methods need much fewer episodes (save up to $65\%$ episodes) to achieve the estimation error that the on-policy Monte Carlo estimator achieves with $100$ episodes.
Recognizing episodes may have different lengths in MuJoCo,
we also provide in Appendix~\ref{append:mujoco} a version of Figure~\ref{fig:mujoco} with the $x$-axis being steps,
where our methods are still consistently better. 

It is worth mentioning that all hyperparameters of our methods required to learn $\hat{\mu}$ are tuned offline and are the same across all MuJoCo and Gridworld experiments.

\section{Conclusion}

Monte Carlo methods are the most dominant approach for evaluating a policy.
The development and deployment of almost all RL algorithms,
including offline RL algorithms,
implicitly or explicitly depend on Monte Carlo methods more or less.
For example,
when an RL researcher wants to plot a curve of the agent performance against training steps,
Monte Carlo methods are usually the first choice.
Our method improves the online data efficiency of Monte Carlo evaluation while maintaining its unbiasedness by learning a tailored behavior policy from offline data.
The two main contributions are the provably better closed-form behavior policy (Theorem~\ref{lem:var_smaller_stronger}) and its alternative representation (Theorem~\ref{lem:hat-q-recursive}).
Extending them to temporal difference learning \citep{sutton1988learning} is a possible future work.


\section*{Acknowledgements}
We thank Yuxin Chen for warm revising comments and Haifeng Xu and Zhengkun Xiao for insightful discussions. 
This work is supported in part by the US National Science Foundation under grants III-2128019 and SLES-2331904.

\section*{Impact Statement}
This paper advances the field of reinforcement learning and machine learning. There are many potential societal consequences of our work, none of which we feel must be specifically highlighted here.

\clearpage

\bibliography{bibliography}
\bibliographystyle{icml2024}

\newpage
\appendix
\onecolumn

\section{Proofs}

\subsection{Proof of Lemma~\ref{lem stats unbiasedness}}
\label{sec proof lem stats unbiasedness}
\begin{proof}
\begin{align}
  \E_{A\sim\mu}\left[\rho(A)q(A)\right] =& \sum_{a \in \qty{a|\mu(a) > 0}} \mu(a) \frac{\pi(a)}{\mu(a)} q(a) \\
  =& \sum_{a \in \qty{a|\mu(a) > 0}} \pi(a) q(a) \\
  =& \sum_{a \in \qty{a|\mu(a) > 0}} \pi(a) q(a) + \sum_{a \in \qty{a | \mu(a) = 0}} \pi(a)q(a) \explain{$\mu \in \Lambda$} \\
  =&\sum_a \pi(a)q(a) \\
  =&\E_{A\sim\pi}\left[q(A)\right].
\end{align} 

The intuition in the third equation is that the sample $a$ where $\mu$ does not cover $\pi$ must satisfy $q(a) = 0$,
i.e.,
this sample does not contribute to the expectation anyway.
\end{proof}
\subsection{Proof of Lemma~\ref{lem:math-optimal}}\label{append:math-optimal}

\begin{proof}\hspace{1cm}\\
For a given $\pi$ and $q$,
define
\begin{align}
  \fA_+ \doteq \qty{a \mid \pi(a)q(a) \neq 0}.
\end{align} 
For any $\mu \in \Lambda$, 
we expand the variance as 
\begin{align}
\label{eq:math-variance}
&\V_{A\sim \mu}(\rho(A)q(A)) \\
=& \E_{A\sim \mu}[(\rho(A)q(A))^2] - \E_{A\sim \mu}^2[\rho(A)q(A)] \\
=& \E_{A\sim \mu}[(\rho(A)q(A))^2] - \E_{A\sim \pi}^2[q(A)]  \explain{Lemma~\ref{lem stats unbiasedness}} \\
=& \sum_{a \in \qty{a \mid \mu(a) > 0}} \frac{\pi^2(a)q^2(a)}{\mu(a)}- \E_{A\sim \pi}^2[q(A)] \\
=& \sum_{a \in \qty{a \mid \mu(a) > 0} \cap \fA_+} \frac{\pi^2(a)q^2(a) }{\mu(a)}- \E_{A\sim \pi}^2[q(A)] \explain{$\pi(a)q(a) = 0, \forall a \notin \fA_+$} \\ 
=& \sum_{a \in \fA_+} \frac{\pi^2(a)q^2(a) }{\mu(a)}- \E_{A\sim \pi}[q(A)]^2 \explain{$\mu \in \Lambda$} 
\end{align}
The second term is a constant and is unrelated to $\mu$. 
Solving the optimization problem \eqref{eq:math-optimization} is,
therefore, equivalent to solving
\begin{align} 
\text{min}_{\mu \in \Lambda}  \quad & 
\sum_{a \in \fA_+} \frac{\pi^2(a)q^2(a) }{\mu(a)} \label{eq:math-optimization-2}.
\end{align}
\textbf{Case 1: $\abs{\fA_+} = 0$} \\
In this case,
the variance is always $0$ so any $\mu \in \Lambda$ is optimal.
In particular, $\mu^*(a) = \frac{1}{\fA}$ is optimal. \\
\textbf{Case 2: $\abs{\fA_+} > 0$} \\ 
The definition of $\Lambda$ in~\eqref{eq stats search space} can be equivalently expressed, using contraposition, as 
\begin{align}
  \Lambda = \qty{\mu \in \Delta(\fA) \mid \forall a, a \in \fA_+ \implies \mu(a) > 0}.
\end{align}
The optimization problem~\eqref{eq:math-optimization-2} can then be equivalently written as
\begin{align}
 \text{min}_{\mu \in \Delta(\fA)}  \quad & 
\sum_{a \in \fA_+} \frac{\pi^2(a)q^2(a) }{\mu(a)} \label{eq:math-optimization-3} \\
\text{s.t.} \quad &
\mu(a) > 0 \quad \forall a \in \fA_+.
\end{align}
If for some $\mu$ we have
$\sum_{a\in \fA_+} \mu(a) < 1$,
then there must exist some $a_0 \notin \fA_+$ such that $\mu(a_0) > 0$.
Since $a_0$ does not contribute to the summation in the objective function of~\eqref{eq:math-optimization-3},
we can move the probability mass on $a_0$ to some other $a_1 \in \fA_+$ to increase $\mu(a_1)$ to further decrease the objective.
In other words,
any optimal solution $\mu$ to~\eqref{eq:math-optimization-3} must put all its mass on $\fA_+$.
This motivates the following problem
\begin{align} 
\text{min}_{z \in \Delta(\fA_+)}  \quad & 
\sum_{a \in \fA_+} \frac{\pi^2(a)q^2(a)}{z(a)} \label{eq:math-optimization-4} \\
\text{s.t.} \quad &
z(a) > 0 \quad \forall a \in \fA_+.
\end{align}
In particular, if $z_*$ is an optimal solution to~\eqref{eq:math-optimization-4},
then an optimal solution to~\eqref{eq:math-optimization-3} can be constructed as
\begin{align}
  \label{eq opt mu construnction}
  \mu_*(a) = \begin{cases}
    z_*(a) & a \in \fA_+ \\
    0 & \text{otherwise}.
  \end{cases}
\end{align}
Let $\R_{++} \doteq (0, +\infty)$.

According to the Cauchy-Schwarz inequality,
for any $z \in \R_{++}^\abs{\fA_+}$, 
we have
\begin{align}
 \left(\sum_{a \in \fA_+} \frac{\pi^2(a)q^2(a)}{z(a)}\right)\left(\sum_{a \in \fA_+} z(a)\right) \geq \left(\sum_{a\in\fA_+} \frac{\pi(a)\abs{q(a)}}{\sqrt{z(a)}} \sqrt{z(a)}\right)^2 = \left(\sum_{a\in\fA_+} \pi(a)\abs{q(a)}\right)^2.
\end{align}
It can be easily verified that the equality holds for 
\begin{align}
  z^*(a) \doteq \frac{\pi(a)\abs{q(a)}}{\sum_{b} \pi(b)\abs{q(b)}} > 0.
\end{align}
Since $\sum_{a\in\fA_+} z^*(a) = 1$,
we conclude that $z^*$ is an optimal solution to~\eqref{eq:math-optimization-4}.
An optimal solution $\mu_*$ to~\eqref{eq:math-optimization} can then be constructed according to~\eqref{eq opt mu construnction}.
Making use of the fact that $\pi(a)\abs{q(a)} = 0$ for $a \notin \fA_+$,
this $\mu_*$ can be equivalently expressed as 
\begin{align}
  \mu_*(a) = \frac{\pi(a)\abs{q(a)}}{\sum_{b \in \fA} \pi(b)q(b)},
\end{align}
which completes the proof.
\end{proof}

\subsection{Proof of Lemma~\ref{lem:math-variance-0}}\label{append:math-variance-0}
\begin{proof}
We start by showing $\Lambda = \Lambda_+$.
Lemma~\ref{lem stats unbiasedness} ensures that $\mu \in \Lambda \implies \mu \in \Lambda_+$.
We now show that $\mu \in \Lambda_+ \implies \mu \in \Lambda$.
For any $\mu \in \Lambda_+$,
we have 
\begin{align}
  \sum_{a \in \qty{a | \mu(a) > 0}} \mu(a) \frac{\pi(a)}{\mu(a)} q(a) = \sum_a \pi(a) q(a).
\end{align}
This indicates that 
\begin{align}
  \sum_{a \in \qty{a | \mu(a) = 0}} \pi(a) q(a) = 0.
\end{align}
Since $\pi(a) \geq 0$ and all $q(a)$ has the same sign,
we must have
\begin{align}
\pi(a)q(a) = 0, \, \forall a \in \qty{a \mid \mu(a) = 0}.
\end{align}
This is exactly $\mu(a) = 0 \implies \pi(a)q(a) = 0$,
yielding $\mu \in \Lambda$.
This completes the proof of $\Lambda_+ = \Lambda$.

We now show the zero variance.
When $\forall a \in \fA, q(a) \geq 0$, if $\exists a_0, \pi_0(a_0)  q(a_0) \neq 0$,  we have $\forall a \in \fA$
\begin{align}
\mu^*(a) = \frac{\pi(a) \abs{q(a)}}{c} 
\end{align}
and $c>0$ is a normalizing constant. Plugging $\mu^*$ to $\rho(A)q(A)$, we get  $\forall a \in \fA$
\begin{align}
\rho(a)q(a) = \frac{\pi(a)}{\mu^*(a)}q(a) =  \frac{\pi(a)}{\frac{\pi(a) \abs{q(a)}}{c} }q(a)  = c.
\end{align}
This means in this setting, with the optimal distribution $\mu^*$, the random variable $\rho(\cdot)q(\cdot)$ is a constant function. 
Thus, 
\begin{align}
\V_{A\sim \mu^*}(\rho(A)q(A))  = 0.
\end{align}

When $\forall a \in \fA, q(a) \geq 0$, if $\forall a_0, \pi_0(a_0)  q(a_0) = 0$,  we have $\forall a \in \fA$
\begin{align}
\mu^*(a) = \frac{1}{|\fA|}.
\end{align}
 Plugging $\mu^*$ to $\rho(A)q(A)$,  we get  $\forall a \in \fA$
\begin{align}
\rho(a)q(a) = \frac{\pi(a)}{\mu^*(a)}q(a) =  \frac{\pi(a)q(a)}{ \frac{1}{|\fA|}}  = 0.
\end{align}
This shows $\rho(A)q(A)$ is also a constant. 
Thus, 
\begin{align}
\V_{A\sim \mu^*}(\rho(A)q(A))  = 0.
\end{align}
The proof is similar for $\forall a \in \fA, q(a) \leq 0$ and is thus omitted.

\end{proof}

\subsection{Proof of Theorem~\ref{lem rl pdis unbaised}}
\label{sec lem rl pdis unbaised}
\begin{proof}
  We proceed via induction.
  For $t = T-1$,
  we have
  \begin{align}
    \E\left[\pdisg(\tau^{\mu_{t:T-1}}_{t:T-1}) \mid S_t\right] =& \E\left[\rho_t R_{t+1} \mid S_t\right] = \E\left[\rho_t q_{\pi, t}(S_t, A_t) \mid S_t \right] \\
    =& \E_{A_t \sim \pi_t(\cdot \mid S_t)}\left[q_{\pi, t}(S_t, A_t) \mid S_t\right] \explain{Lemma~\ref{lem stats unbiasedness}} \\
    =& v_{\pi, t}(S_t).
  \end{align}
  For $t \in [T-2]$,
  we have
\begin{align}
&\E\left[\pdisg(\tau^{\mu_{t:T-1}}_{t:T-1}) \mid S_t\right] \\
=& \E\left[\rho_t R_{t+1} + \rho_t\pdisg(\tau^{\mu_{t+1:T-1}}_{t+1:T-1}) \mid S_t\right] \\
=& \E\left[\rho_t R_{t+1} \mid S_t\right] + \E\left[\rho_t\pdisg(\tau^{\mu_{t+1:T-1}}_{t+1:T-1}) \mid S_t\right] \\
\explain{Law of total expectation}
=& \E\left[\rho_t R_{t+1} \mid S_t\right] + \E_{A_t \sim \mu_t(\cdot \mid S_t), S_{t+1} \sim p(\cdot \mid S_t, A_t)}\left[ \E\left[\rho_t\pdisg(\tau^{\mu_{t+1:T-1}}_{t+1:T-1}) \mid S_t, A_t, S_{t+1}\right] \mid S_t \right] \\
\explain{Conditional independence and Markov property}
=& \E\left[\rho_t R_{t+1} \mid S_t\right] + \E_{A_t \sim \mu_t(\cdot \mid S_t), S_{t+1} \sim p(\cdot \mid S_t, A_t)}\left[ \rho_t \E\left[\pdisg(\tau^{\mu_{t+1:T-1}}_{t+1:T-1}) \mid S_{t+1}\right] \mid S_t \right] \\
=& \E\left[\rho_t R_{t+1} \mid S_t\right] + \E_{A_t \sim \mu_t(\cdot \mid S_t), S_{t+1} \sim p(\cdot \mid S_t, A_t)}\left[ \rho_t v_{\pi, t+1}(S_{t+1}) \mid S_t \right]
\explain{Inductive hypothesis} \\
=& \E_{A_t \sim \mu_t(\cdot \mid S_t)}\left[\rho_t q_{\pi, t}(S_t, A_t) \mid S_t\right] \explain{Definition of $q_{\pi, t}$} \\
=& \E_{A_t \sim \pi_t(\cdot \mid S_t)}\left[q_{\pi, t}(S_t, A_t) \mid S_t\right] \explain{Lemma~\ref{lem stats unbiasedness}} \\
=& v_{\pi, t}(S_t),
\end{align}
  which completes the proof.
\end{proof}


\subsection{Proof of Theorem~\ref{lem:rl-optimal}}\label{append:rl-optima}

To prove Theorem~\ref{lem:rl-optimal},
we rely on a recursive expression of the PDIS Monte Carlo estimator summarized by the following lemma.

\begin{lemma}[Recursive Expression of Variance] \label{lem:recursive-var}
For any $\mu \in \Lambda$, for $t = T-1$,
\begin{align}
\V\left(\pdisg(\tau^{\mu_{t:T-1}}_{t:T-1})\mid S_t\right) = \E_{A_t \sim \mu_t}\left[\rho_t^2 q_{\pi, t}^2(S_t, A_t) \mid S_t\right] -  v_{\pi, t}^2(S_t),
\end{align}
for $t \in [T-2]$,
\begin{align}
&\V\left(\pdisg(\tau^{\mu_{t:T-1}}_{t:T-1})\mid S_t\right) \\
=& \E_{A_t\sim \mu_t}\left[\rho_t^2 \left(\E_{S_{t+1}}\left[\V\left(\pdisg(\tau^{\mu_{t+1:T-1}}_{t+1:T-1})\mid S_t\right) \mid S_t, A_t\right] + \nu_{\pi,t}(S_t, A_t) + q_{\pi, t}^2(S_t, A_t)\right) \mid S_t\right] \\
&- v_{\pi, t}^2(S_t).
\end{align}
\end{lemma}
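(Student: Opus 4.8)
The plan is to establish both cases by unwinding the recursive definition~\eqref{eq:PDIS-recursive} of the PDIS estimator and repeatedly applying the identity $\V(Z \mid \cdot) = \E[Z^2 \mid \cdot] - (\E[Z \mid \cdot])^2$, where Theorem~\ref{lem rl pdis unbaised} is used to evaluate every conditional first moment that appears. For the base case $t = T-1$, I would start from $\pdisg(\tau^{\mu_{T-1:T-1}}_{T-1:T-1}) = \rho_{T-1}R_T$ and observe that $R_T = r(S_{T-1}, A_{T-1}) = q_{\pi, T-1}(S_{T-1}, A_{T-1})$, since the action-value at the final step is exactly the immediate reward. Then $\E[(\rho_{T-1}R_T)^2 \mid S_{T-1}] = \E_{A_{T-1} \sim \mu_{T-1}}[\rho_{T-1}^2 q_{\pi, T-1}^2(S_{T-1}, A_{T-1}) \mid S_{T-1}]$, while the squared conditional mean equals $v_{\pi, T-1}^2(S_{T-1})$ by Theorem~\ref{lem rl pdis unbaised}; subtracting gives the stated formula.

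For $t \in [T-2]$, write $X \doteq \pdisg(\tau^{\mu_{t:T-1}}_{t:T-1})$ and $Y \doteq \pdisg(\tau^{\mu_{t+1:T-1}}_{t+1:T-1})$, so that $X = \rho_t(R_{t+1} + Y)$. Since $\E[X \mid S_t] = v_{\pi, t}(S_t)$ by Theorem~\ref{lem rl pdis unbaised}, it suffices to show that $\E[X^2 \mid S_t]$ matches the expectation term in the statement, after which the $-v_{\pi, t}^2(S_t)$ correction follows automatically. Conditioning first on $A_t$ and factoring out $\rho_t$ and $R_{t+1}$ (both measurable with respect to $(S_t, A_t)$), the task reduces to computing $\E[(R_{t+1} + Y)^2 \mid S_t, A_t]$. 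Here I would condition further on $S_{t+1}$ and invoke the Markov property: given $S_{t+1}$, the trajectory segment generating $Y$ is independent of $(S_t, A_t)$, so $\E[Y \mid S_t, A_t, S_{t+1}] = v_{\pi, t+1}(S_{t+1})$ and $\E[Y^2 \mid S_t, A_t, S_{t+1}] = \V(Y \mid S_{t+1}) + v_{\pi, t+1}^2(S_{t+1})$. Expanding the square with $R_{t+1}$ held constant then yields $\E[(R_{t+1} + Y)^2 \mid S_t, A_t, S_{t+1}] = (R_{t+1} + v_{\pi, t+1}(S_{t+1}))^2 + \V(Y \mid S_{t+1})$, where the inner variance is the level-$(t+1)$ quantity, which is what makes the formula a genuine recursion.

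Finally I would take the expectation over $S_{t+1}$ given $(S_t, A_t)$. Setting $W \doteq R_{t+1} + v_{\pi, t+1}(S_{t+1})$, the one-step Bellman identity (immediate from the definitions of $q_{\pi,t}$ and $v_{\pi,t+1}$) gives $\E[W \mid S_t, A_t] = q_{\pi, t}(S_t, A_t)$, while $\V(W \mid S_t, A_t) = \nu_{\pi, t}(S_t, A_t)$ by~\eqref{def:nu} because $R_{t+1}$ is constant given $(S_t, A_t)$; hence $\E[W^2 \mid S_t, A_t] = \nu_{\pi, t}(S_t, A_t) + q_{\pi, t}^2(S_t, A_t)$. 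Combining the two pieces gives $\E[(R_{t+1} + Y)^2 \mid S_t, A_t] = \E_{S_{t+1}}[\V(Y \mid S_{t+1}) \mid S_t, A_t] + \nu_{\pi, t}(S_t, A_t) + q_{\pi, t}^2(S_t, A_t)$, which is exactly the bracketed term; reinstating $\rho_t^2$ and the outer expectation over $A_t \sim \mu_t$ completes the argument.

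The main obstacle will be the careful bookkeeping of measurability with respect to the relevant conditioning: justifying $\E[Y \mid S_t, A_t, S_{t+1}] = \E[Y \mid S_{t+1}]$ through the Markov property, and confirming that $\rho_t$ and $R_{t+1}$ legitimately factor out of the corresponding conditional expectations. Once this is handled, every remaining step is a routine moment computation, and unbiasedness from Theorem~\ref{lem rl pdis unbaised} disposes of all first-moment terms cleanly.
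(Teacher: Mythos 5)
Your proof is correct, and it rests on the same skeleton as the paper's: condition on $A_t$ first, then on $S_{t+1}$, use the Markov property to reduce $\E[Y \mid S_t, A_t, S_{t+1}]$ to $\E[Y \mid S_{t+1}]$, and use Theorem~\ref{lem rl pdis unbaised} to evaluate every conditional first moment. The difference is in the bookkeeping tool: the paper applies the law of total variance~\eqref{eq:total-variance} twice (once over $A_t$, once over $S_{t+1}$) and keeps conditional variances as the central objects throughout, whereas you work with raw second moments via the tower property and convert to a variance only at the last step through $\V(X \mid S_t) = \E[X^2 \mid S_t] - v_{\pi,t}^2(S_t)$. The two are algebraically equivalent---the law of total variance is itself proved by exactly the moment expansion you carry out---so your route is slightly more elementary and self-contained, at the cost of re-deriving inline what the named identity packages; the paper's version has the advantage that the intermediate decomposition $\E_{A_t}\left[\V(\cdot \mid S_t, A_t)\right] + \V_{A_t}\left(\E[\cdot \mid S_t, A_t]\right)$ appears explicitly, which is the form echoed in the proofs of Theorem~\ref{lem:rl-optimal} and Lemma~\ref{lem:u_variance_eq}. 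An incidental benefit of your derivation: it makes clear that the inner variance in the recursion must be conditioned on $S_{t+1}$ (the ``$\mid S_t$'' inside the lemma statement is a typo), which matches the paper's own proof and every later use of the lemma. The measurability points you flag ($\rho_t$ and $R_{t+1}$ being functions of $(S_t, A_t)$, and the conditional independence of the future segment given $S_{t+1}$) are handled exactly as you anticipate, so there is no gap.
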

\begin{proof}
When $t\in [T-2]$, we have
\begin{align}
\label{eq tmp4}
&\V\left(\pdisg(\tau^{\mu_{t:T-1}}_{t:T-1})\mid S_t\right) \\
=& \E_{A_t}\left[ \V\left(\pdisg(\tau^{\mu_{t:T-1}}_{t:T-1})\mid S_t, A_t\right)\mid S_t\right] + \V_{A_t}\left(\E\left[\pdisg(\tau^{\mu_{t:T-1}}_{t:T-1}) \mid S_t, A_t\right]\mid S_t\right) 
\explain{Law of total variance} \\
=& \E_{A_t}\left[ \rho_t^2 \V\left(r(S_t,A_t) + \pdisg(\tau^{\mu_{t+1:T-1}}_{t+1:T-1}) \mid S_t, A_t\right)\mid S_t\right] \\
&+ \V_{A_t}\left(\rho_t \E\left[r(S_t,A_t) + \pdisg(\tau^{\mu_{t+1:T-1}}_{t+1:T-1}) \mid S_t, A_t\right]\mid S_t\right)  
\explain{Using \eqref{eq:PDIS-recursive}}
\\
=& \E_{A_t}\left[ \rho_t^2 \V\left(\pdisg(\tau^{\mu_{t+1:T-1}}_{t+1:T-1}) \mid S_t, A_t\right)\mid S_t\right] + \V_{A_t}\left(\rho_t \E\left[r(S_t,A_t) + \pdisg(\tau^{\mu_{t+1:T-1}}_{t+1:T-1}) \mid S_t, A_t\right]\mid S_t\right) \explain{Deterministic reward $r$} \\
=& \E_{A_t}\left[ \rho_t^2 \V\left(\pdisg(\tau^{\mu_{t+1:T-1}}_{t+1:T-1}) \mid S_t, A_t\right)\mid S_t\right] + \V_{A_t}\left(\rho_t q_{\pi, t}(S_t, A_t)\mid S_t\right).
\end{align}

Further decomposing the first term, we have
  \begin{align}
    \label{eq tmp3}
  &\V\left(\pdisg(\tau^{\mu_{t+1:T-1}}_{t+1:T-1}) \mid S_t, A_t\right) \\
  =& \E_{S_{t+1}}\left[\V\left(\pdisg(\tau^{\mu_{t+1:T-1}}_{t+1:T-1}) \mid S_t, A_t, S_{t+1}\right) \mid S_t, A_t\right] \\
  &+ \V_{S_{t+1}}\left(\E\left[\pdisg(\tau^{\mu_{t+1:T-1}}_{t+1:T-1}) \mid S_t, A_t, S_{t+1}\right]\mid S_t, A_t\right) 
  \explain{Law of total variance}
  \\
  =& \E_{S_{t+1}}\left[\V\left(\pdisg(\tau^{\mu_{t+1:T-1}}_{t+1:T-1}) \mid S_{t+1}\right) \mid S_t, A_t\right] + \V_{S_{t+1}}\left(\E\left[\pdisg(\tau^{\mu_{t+1:T-1}}_{t+1:T-1}) \mid S_{t+1}\right]\mid S_t, A_t\right) \explain{Markov property} \\
  =& \E_{S_{t+1}}\left[\V\left(\pdisg(\tau^{\mu_{t+1:T-1}}_{t+1:T-1}) \mid S_{t+1}\right) \mid S_t, A_t\right] + \V_{S_{t+1}}\left(v_{\pi, t+1}(S_{t+1})\mid S_t, A_t\right). \explain{Theorem~\ref{lem rl pdis unbaised}}
  \end{align}
  With $\nu_{\pi, t}$ defined in~\eqref{def:nu},
  plugging~\eqref{eq tmp3} back to~\eqref{eq tmp4} yields
  \begin{align}
    &\V\left(\pdisg(\tau^{\mu_{t:T-1}}_{t:T-1})\mid S_t\right) \\
    =&\E_{A_t}\left[\rho_t^2 \left(\E_{S_{t+1}}\left[\V\left(\pdisg(\tau^{\mu_{t+1:T-1}}_{t+1:T-1}) \mid S_{t+1}\right) \mid S_t, A_t\right] + \nu_t(S_t, A_t)\right) \mid S_t\right] \\
    &+ \V_{A_t}\left(\rho_t q_{\pi, t}(S_t, A_t)\mid S_t\right) \\
    =&\E_{A_t}\left[\rho_t^2 \left(\E_{S_{t+1}}\left[\V\left(\pdisg(\tau^{\mu_{t+1:T-1}}_{t+1:T-1}) \mid S_{t+1}\right) \mid S_t, A_t\right] + \nu_t(S_t, A_t)\right) \mid S_t\right] \\
    &+ \E_{A_t}\left[\rho_t^2 q_{\pi, t}^2(S_t, A_t)\mid S_t\right] - \left(\E_{A_t}\left[\rho_t q_{\pi, t}(S_t, A_t) \mid S_t\right]\right)^2 \\
    =&\E_{A_t}\left[\rho_t^2 \left(\E_{S_{t+1}}\left[\V\left(\pdisg(\tau^{\mu_{t+1:T-1}}_{t+1:T-1}) \mid S_{t+1}\right) \mid S_t, A_t\right] + \nu_t(S_t, A_t)\right) \mid S_t\right] \\
    &+ \E_{A_t}\left[\rho_t^2 q_{\pi, t}^2(S_t, A_t)\mid S_t\right] -  v_{\pi, t}^2(S_t). \explain{Lemma~\ref{lem stats unbiasedness}}
  \end{align}
  When $t =  T-1$, we have
  \begin{align}
  \V\left(\pdisg(\tau^{\mu_{t:T-1}}_{t:T-1})\mid S_t\right) =& \V\left(\rho_t r(S_t, A_t)\mid S_t\right) \\
  =& \V\left(\rho_t q_{\pi, t}(S_t, A_t)\mid S_t\right) \\
  =&  \E_{A_t}\left[\rho_t^2 q_{\pi, t}^2(S_t, A_t) \mid S_t\right] -  v_{\pi, t}^2(S_t),
  \end{align}
  which completes the proof.
  \end{proof}

We restate and present the main proof of Theorem~\ref{lem:rl-optimal}.
\restaterloptimal*
\begin{proof}
  We proceed via induction.
  When $t = T-1$,
  we have
  \begin{align}
    &\V\left(\pdisg(\tau^{\mu_{T-1:T-1}}_{T-1:T-1}) \mid S_{T-1} = s\right) \\
    =& \V_{A_{T-1}}\left(\rho_{T-1} r(s, A_{T-1})\mid S_{T-1} =s \right) \\
    =& \V_{A_{T-1}}\left(\rho_{T-1} q_{\pi, T-1}(s, A_{T-1})\mid S_{T-1} =s \right).
  \end{align}
  The definition of $\mu^*_{T-1}$ in~\eqref{eq mu star def1} and Lemma~\ref{lem:math-optimal} ensure that $\mu^*_{T-1}$ is an optimal solution to
  \begin{align} 
    \min_{\mu_{T-1} \in \Lambda_{T-1}}  \quad \V\left(\pdisg\left(\tau^{\mu_{T-1}}_{T-1}\right) \mid S_{T-1} = s\right).
  \end{align}
  Now,
  suppose for some $t \in [T-2]$,
  $\mu^*_{t+1:T-1}$ is an optimal solution to 
  \begin{align} 
    \min_{\mu_{t+1} \in \Lambda_{t+1}, \dots, \mu_{T-1} \in \Lambda_{T-1}}  \quad \V\left(\pdisg\left(\tau^{\mu_{t+1:T-1}}_{t+1:T-1}\right)\mid S_{t+1}=s\right).
  \end{align}
  To complete induction,
  we proceed to proving that $\mu^*_{t:T-1}$ is an optimal solution to
  \begin{align} 
    \label{eq induction opt problm}
    \min_{\mu_t \in \Lambda_t, \dots, \mu_{T-1} \in \Lambda_{T-1}}  \quad \V\left(\pdisg\left(\tau^{\mu_{t:T-1}}_{t:T-1}\right) \mid S_t = s\right).
  \end{align}
  In the rest of this proof,
  we omit the domain $\Lambda_t, \dots, \Lambda_{T-1}$ for simplifying notations.
  For any $\mu_{t:T-1}$, we have
\begin{align}
&\V\left(\pdisg(\tau^{\mu_{t:T-1}}_{t:T-1})\mid S_{t}\right) \\
=& \E_{A_{t}}\left[\rho_{t}^2 \left(\E_{S_{t+1}}\left[\V\left(\pdisg(\tau^{\mu_{{t+1}:T-1}}_{{t+1}:T-1}) \mid S_{t+1}\right) \mid S_{t}, A_{t}\right] + \nu_{t}(S_{t}, A_{t}) + q_{\pi, t}^2(S_{t}, A_{t}) \right) \mid S_{t}\right] \\
& -  v_{\pi, t}^2(S_{t})  \explain{By Lemma~\ref{lem:recursive-var}}  \\
\stackrel{(a)}{\geq} & \E_{A_{t}}\left[\rho_{t}^2 \left(\E_{S_{t+1}}\left[\min_{\mu'_{t+1:T-1}}\V\left(\pdisg(\tau^{\mu'_{{t+1}:T-1}}_{{t+1}:T-1}) \mid S_{t+1}\right) \mid S_{t}, A_{t}\right] + \nu_{t}(S_{t}, A_{t}) + q_{\pi, t}^2(S_{t}, A_{t}) \right) \mid S_{t}\right] \\
& -  v_{\pi, t}^2(S_{t})  \explain{Monotonically non-increasing in $\V(\cdot)$}  \\
=& \E_{A_{t}}\left[\rho_{t}^2 \left(\E_{S_{t+1}}\left[\V\left(\pdisg(\tau^{\mu^*_{{t+1}:T-1}}_{{t+1}:T-1}) \mid S_{t+1}\right) \mid S_{t}, A_{t}\right] + \nu_{t}(S_{t}, A_{t}) + q_{\pi, t}^2(S_{t}, A_{t}) \right) \mid S_{t}\right] \\
& -  v_{\pi, t}^2(S_{t}) \explain{Inductive hypothesis} \\
=& \E_{A_{t}}\left[\rho_{t}^2 u_{\pi,t}(S_t,A_t)\mid S_{t}\right]  -  v_{\pi, t}^2(S_{t}) \explain{By \eqref{eq u def}} \\
=& \V_{A_{t}}\left(\rho_{t} \sqrt{u_{\pi,t}(S_t,A_t)}\mid S_{t}\right) + \E_{A_t}\left[\rho_t \sqrt{u_{\pi, t}(S_t, A_t)} \mid S_t\right]^2 -  v_{\pi, t}^2(S_{t}) \explain{Definition of variance} \\
=& \V_{A_t}\left(\rho_{t} \sqrt{u_{\pi,t}(S_t,A_t)}\mid S_{t}\right) + \E_{A_t\sim\pi_t(\cdot \mid S_t)}\left[\sqrt{u_{\pi, t}(S_t, A_t)} \mid S_t\right]^2 -  v_{\pi, t}^2(S_{t}) \explain{Lemma~\ref{lem stats unbiasedness} and $\mu_t \in \Lambda_t$} \\
\stackrel{(b)}{\geq}& \E_{A_t\sim\pi_t(\cdot \mid S_t)}\left[\sqrt{u_{\pi, t}(S_t, A_t)} \mid S_t\right]^2 -  v_{\pi, t}^2(S_{t}) \explain{Non-negativity of variance}.
\end{align}
According to the inductive hypothesis,
the equality in $(a)$ can be achieved when $\mu_{t+1:T-1} = \mu^*_{t+1:T-1}$.
According to the construction of $\mu^*_t$ in~\eqref{eq mu star def1} and Lemma~\ref{lem:math-variance-0},
the equality in $(b)$ can be achieved when $\mu_t = \mu^*_t$.
This suggests that $\mu^*_{t:T-1}$ achieves the lower bound and is thus an optimal solution to~\eqref{eq induction opt problm},
which completes the induction and thus completes the proof.
\end{proof}

\subsection{Proof of Theorem~\ref{lem:var_smaller_stronger}
}
\label{append:var_smaller_stronger}

To prove the variance reduction property of $\hat \mu$,
we express
$\V\left(\pdisg(\tau^{\pi_{t:T-1}}_{t:T-1}) \mid S_t = s\right) $,
the variance of the on-policy Monte Carlo estimator,
 in the form of a Bellman equation \citep{variance2016Tamar,o2017uncertainty,sherstan2018directly}. Define 
\begin{align}
 &\tilde r_{\pi,t}(s, a) \doteq
\nu_{\pi,t}(s, a) + q^2_{\pi, t}(s, a) - v^2_{\pi, t}(s) \quad \forall t \in [T-1], \label{def:tilde-r} \\
 &\tilde q_{\pi, t}(s, a)  \doteq
\begin{cases}
\tilde r_{\pi,t}(s, a) + \sum_{s', a'} p(s'|s, a) \pi_{t+1}(a'|s') \tilde{q}_{\pi, t+1}(s', a')  & \text{ if }  t  \in [T-2] \\
\tilde r_{\pi,t}(s, a)   & \text{ if } t = T-1
\end{cases}.\label{def:tilde-q}
\end{align}
We have
\begin{lemma}[Variance Equality]\label{lem:u_variance_eq}
\begin{align}
\V\left(\pdisg(\tau^{\pi_{t:T-1}}_{t:T-1}) \mid S_t = s\right) = \sum_a \pi_t(a|s)\tilde q_{\pi, t}(s, a) \quad \forall t,s.
\end{align} 
\end{lemma}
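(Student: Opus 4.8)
The plan is to specialize the recursive variance formula of Lemma~\ref{lem:recursive-var} to the on-policy case $\mu = \pi$ and then run a backward induction on $t$ from $T-1$ down to $0$. Since $\pi \in \Lambda$ (the implication $\pi_t(a|s) = 0 \implies \pi_t(a|s)q_{\pi,t}(s,a) = 0$ holds trivially), Lemma~\ref{lem:recursive-var} applies. The key simplification is that when $\mu = \pi$ every importance sampling ratio equals one, $\rho_t = 1$, so the factors $\rho_t^2$ in Lemma~\ref{lem:recursive-var} disappear (incidentally $\pdisg(\tau^{\pi_{t:T-1}}_{t:T-1})$ then reduces to the ordinary return $G_t$, though we need not use this fact directly).

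First I would handle the base case $t = T-1$. Specializing Lemma~\ref{lem:recursive-var} gives $\V(\pdisg(\tau^{\pi_{T-1:T-1}}_{T-1:T-1}) \mid S_{T-1}=s) = \sum_a \pi_{T-1}(a|s) q^2_{\pi,T-1}(s,a) - v^2_{\pi,T-1}(s)$. Since $\nu_{\pi,T-1}(s,a) = 0$ by~\eqref{def:nu}, the definitions~\eqref{def:tilde-r}--\eqref{def:tilde-q} give $\tilde q_{\pi,T-1}(s,a) = q^2_{\pi,T-1}(s,a) - v^2_{\pi,T-1}(s)$; weighting by $\pi_{T-1}(a|s)$ and using $\sum_a \pi_{T-1}(a|s) = 1$ to pull the $a$-independent term $v^2_{\pi,T-1}(s)$ out of the sum reproduces exactly the right-hand side, settling the base case.

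For the inductive step, I would assume the identity holds at $t+1$, i.e. $\V(\pdisg(\tau^{\pi_{t+1:T-1}}_{t+1:T-1}) \mid S_{t+1}=s') = \sum_{a'} \pi_{t+1}(a'|s')\tilde q_{\pi,t+1}(s',a')$ for every $s'$. Substituting this into the $t \in [T-2]$ branch of Lemma~\ref{lem:recursive-var} (again with $\rho_t = 1$) expresses $\V(\pdisg(\tau^{\pi_{t:T-1}}_{t:T-1})\mid S_t=s)$ as $\sum_a \pi_t(a|s)\big[\sum_{s'} p(s'|s,a)\sum_{a'}\pi_{t+1}(a'|s')\tilde q_{\pi,t+1}(s',a') + \nu_{\pi,t}(s,a) + q^2_{\pi,t}(s,a)\big] - v^2_{\pi,t}(s)$. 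On the other side, expanding $\sum_a \pi_t(a|s)\tilde q_{\pi,t}(s,a)$ via~\eqref{def:tilde-q} and~\eqref{def:tilde-r} produces the same bracketed sum, with the single subtracted term $-v^2_{\pi,t}(s)$ arising from $\sum_a \pi_t(a|s)(-v^2_{\pi,t}(s)) = -v^2_{\pi,t}(s)$. Matching the two expressions term by term closes the induction.

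I do not expect a genuine obstacle here; the argument is an almost mechanical bookkeeping exercise. The one point that needs care is the precise conditioning in the nested variance term of Lemma~\ref{lem:recursive-var}: the inner variance must be read as conditioned on $S_{t+1}=s'$ (consistent with the definition~\eqref{eq u def} of $u_{\pi,t}$) and then averaged against $p(s'|s,a)$, so that the inductive hypothesis can be plugged in state-by-state. Keeping the $a$-independent factor $v^2_{\pi,t}(s)$ straight, so that it cancels correctly between the recursion's explicit $-v^2_{\pi,t}(s)$ and the $-v^2_{\pi,t}(s)$ buried inside $\tilde r_{\pi,t}$, is the only place where a sign or a stray factor could slip in.
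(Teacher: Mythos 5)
Your proposal is correct and follows essentially the same route as the paper's proof: backward induction on $t$, specializing Lemma~\ref{lem:recursive-var} to $\mu = \pi$ (so $\rho_t = 1$), using the inductive hypothesis on the inner variance term, and folding $\nu_{\pi,t} + q^2_{\pi,t} - v^2_{\pi,t}$ into $\tilde r_{\pi,t}$ to recognize the recursion~\eqref{def:tilde-q}. Your remark about reading the nested variance as conditioned on $S_{t+1}$ is also consistent with how the paper actually uses Lemma~\ref{lem:recursive-var} in its own derivation.
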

\begin{proof}
We proceed via induction. 
When $t = T-1$, we have
\begin{align}
&\V\left(\pdisg(\tau^{\pi_{t:T-1}}_{t:T-1})\mid S_t\right) \\
=& \V_{A_t}\left(\rho_t r(S_t, A_t)\mid S_t\right)  \\
=& \V_{A_t}\left(r(S_t, A_t)\mid S_t\right) \explain{By on-policy} \\
=& \V_{A_t}\left(q_{\pi,t}(S_t, A_t)\mid S_t\right) \\
=& \E_{A_t}\left[q_{\pi, t}^2(S_t, A_t) \mid S_t\right] -  v_{\pi, t}^2(S_t) \\
=& \sum_{a} \pi_t(a|S_t)\tilde q_{\pi, t}(S_t, a) \explain{By~\eqref{def:tilde-q} and $\nu_{\pi, T-1}(s, a) = 0$}.
\end{align}
For $t \in [T-2]$,
we have
\begin{align}
&\V\left(\pdisg(\tau^{\pi_{t:T-1}}_{t:T-1})\mid S_t\right) \\
=&\E_{A_t}\left[ \E_{S_{t+1}}\left[\V\left(\pdisg(\tau^{\pi_{t+1:T-1}}_{t+1:T-1}) \mid S_{t+1}\right) \mid S_t, A_t\right] +  q_{\pi, t}^2(S_t, A_t) + \nu_{\pi,t}(S_t, A_t) \mid S_t\right]
-  v_{\pi, t}^2(S_t) \explain{Lemma~\ref{lem:recursive-var} and on-policy} \\
=&\sum_{a} \pi_t(a|S_t) \left( \sum_{s'} p(s'|S_t, a) \V\left(\pdisg(\tau^{\pi_{t+1:T-1}}_{t+1:T-1}) \mid S_{t+1} = s' \right) + \tilde r(S_t, a) \right)\\
=&\sum_{a} \pi_t(a|S_t) \left( \sum_{s'} p(s'|S_t, a) \sum_{a'}\pi_{t+1}(a'|s')\tilde q_{\pi, t+1}(s', a') + \tilde r(S_t, a) \right) \explain{Inductive hypothesis}\\
=&\sum_{a} \pi_t(a|S_t) \tilde q_{\pi, t}(S_t, a), \explain{By~\eqref{def:tilde-q}}
\end{align}
which completes the proof.
\end{proof}

Here,
this $\tilde q$ is exactly the state-action value function of the target policy $\pi$ in the MDP w.r.t. to a new reward function $\tilde r$.
Manipulating~\eqref{eq def q hat} then yields
\begin{align}
\hat q_{\pi, t}(s, a) =& \sum_{s'}p(s'|s, a)\sum_{a'} \pi_{t+1}(a'|s')\tilde q_{\pi, t+1}(s' ,a') + \nu_t(s, a) + q_{\pi, t}^2(s, a) \\
=&\tilde q_{\pi, t}(s, a) + v_{\pi, t}^2(s).  \label{eq hat and tilde q}
\end{align}

Now, we restate and present the main proof of Theorem~\ref{lem:var_smaller_stronger}.
\revarsmallerstronger*

\begin{proof}
We proceed via induction. 
For $t = T-1$, we  have
\begin{align}
&\V\left(\pdisg(\tau^{\hat \mu_{t:T-1}}_{t:T-1})\mid S_t\right) \\
=&\E_{A_t\sim \hat{\mu}_t}\left[\rho_t^2 q_{\pi, t}^2(S_t, A_t) \mid S_t\right] - v_{\pi, t}^2(S_t) \explain{Lemma~\ref{lem:recursive-var}}\\
=&\E_{A_t\sim \hat{\mu}_t}\left[\rho_t^2 \hat q_{\pi, t}(S_t, A_t) \mid S_t\right] - v^2_{\pi, t}(S_t) \explain{Definition of $\hat q$ \eqref{eq def q hat last step}} \\
=& \V_{A_t\sim \hat{\mu}_t}\left(\rho_t \sqrt{\hat q_{\pi, t}(S_t, A_t)}|S_t\right) + \E_{A_t\sim \hat{\mu}_t}^2\left[\rho_t \sqrt{\hat q_{\pi, t}(S_t, A_t)}|S_t\right] - v^2_{\pi, t}(S_t) \explain{Definition of variance and non-negativity of $\hat q$} \\
=& \V_{A_t\sim \hat{\mu}_t}\left(\rho_t \sqrt{\hat q_{\pi, t}(S_t, A_t)}|S_t\right) + \left(\sum_a \pi_t(a|S_t) \sqrt{\hat q_{\pi, t}(S_t, a)}\right)^2 - v_{\pi, t}^2(S_t) \explain{Lemma~\ref{lem stats unbiasedness}} \\
=& \left(\sum_{a} \pi_t(a |S_t)\sqrt{\hat{q}_{\pi, t}(S_t, a)}\right)^2 - v^2_{\pi, t}(S_t) \explain{Definition of $\hat \mu$ \eqref{def hat mu} and  Lemma~\ref{lem:math-variance-0}} \\
=& \sum_{a} \pi_t(a|S_t)\hat{q}_{\pi, t}(S_t, a) +  \left(\sum_{a} \pi_t(a |S_t)\sqrt{\hat{q}_{\pi, t}(S_t, a)}\right)^2  - \sum_{a}\pi_t(a|S_t)\hat{q}_{\pi, t}(S_t, a) - v_{\pi, t}^2(S_t) \\
=& \V\left(\pdisg(\tau^{\pi_{t:T-1}}_{t:T-1})\mid S_t\right)  +  \left(\sum_{a} \pi_t(a |S_t)\sqrt{\hat{q}_{\pi, t}(S_t, a)}\right)^2  - \sum_{a}\pi_t(a|S_t)\hat{q}_{\pi, t}(S_t, a) \explain{By~\eqref{eq hat and tilde q} and Lemma~\ref{lem:u_variance_eq}} \\
=& \V\left(\pdisg(\tau^{\pi_{t:T-1}}_{t:T-1})\mid S_t\right) - \epsilon_t(S_t).  \explain{Definition of $\epsilon$ \eqref{def:epsilon}}  
\end{align}


For $t\in [T-2]$,
we have
\begin{align}
&\V\left(\pdisg(\tau^{\hat \mu_{t:T-1}}_{t:T-1})\mid S_t\right) \\
=&\E_{A_t\sim \hat{\mu}_t}\left[\rho_t^2 \left(\E_{S_{t+1}}\left[\V\left(\pdisg(\tau^{\hat {\mu}_{t+1:T-1}}_{t+1:T-1}) \mid S_{t+1}\right) \mid S_t, A_t\right] + \nu_{\pi,t}(S_t, A_t) + q_{\pi, t}^2(S_t, A_t)\right) \mid S_t\right] \\
&-  v_{\pi, t}^2(S_t) \explain{Lemma~\ref{lem:recursive-var}}\\
\leq&\E_{A_t\sim \hat{\mu}_t}\Big[\rho_t^2 \Big(\E_{S_{t+1}}\Big[ \sum_{a'} \pi_{t+1}(a'|S_{t+1})\tilde {q}_{\pi, t+1}(S_{t+1}, a') \mid S_t, A_t\Big] + \nu_{\pi,t}(S_t, A_t) \\
& +  q_{\pi, t}^2(S_t, A_t)\Big) \mid S_t\Big]  - v_{\pi, t}^2(S_t) - \E_{A_t \sim \hat \mu_t}\left[\rho_t^2\E_{S_{t+1}}\left[\epsilon_{t+1}(S_{t+1})\mid S_t, A_t\right]\right] \explain{Inductive hypothesis and Lemma~\ref{lem:u_variance_eq}} \\
=&\E_{A_t\sim \hat{\mu}_t}\left[\rho_t^2 \left(\tilde{q}_{\pi, t}(S_t, A_t) + v^2_{\pi, t}(S_t)\right) \mid S_t\right] - v^2_{\pi, t}(S_t)  - \E_{A_t \sim \hat \mu_t}\left[\rho_t^2\E_{S_{t+1}}\left[\epsilon_{t+1}(S_{t+1})\mid S_t, A_t\right]\right] \explain{Definition of $\tilde q $ \eqref{def:tilde-q}} \\
=&\E_{A_t\sim \hat{\mu}_t}\left[\rho_t^2 \hat q_{\pi, t}(S_t, A_t) \mid S_t\right] - v^2_{\pi, t}(S_t)  - \E_{A_t \sim \hat \mu_t}\left[\rho_t^2\E_{S_{t+1}}\left[\epsilon_{t+1}(S_{t+1})\mid S_t, A_t\right]\right] \explain{Definition of $\hat q$ \eqref{eq hat and tilde q}} \\
=& \V_{A_t\sim \hat{\mu}_t}\left(\rho_t \sqrt{\hat q_{\pi, t}(S_t, A_t)}|S_t\right) + \E_{A_t\sim \hat{\mu}_t}^2\left[\rho_t \sqrt{\hat q_{\pi, t}(S_t, A_t)}|S_t\right] - v^2_{\pi, t}(S_t)  \\
&- \E_{A_t \sim \hat \mu_t}\left[\rho_t^2\E_{S_{t+1}}\left[\epsilon_{t+1}(S_{t+1})\mid S_t, A_t\right]\right] \explain{Definition of variance and non-negativity of $\hat q$} \\
=& \V_{A_t\sim \hat{\mu}_t}\left(\rho_t \sqrt{\hat q_{\pi, t}(S_t, A_t)}|S_t\right) + \left(\sum_a \pi_t(a|S_t) \sqrt{\hat q_{\pi, t}(S_t, a)}\right)^2 - v_{\pi, t}^2(S_t)  \\
&- \E_{A_t \sim \hat \mu_t}\left[\rho_t^2\E_{S_{t+1}}\left[\epsilon_{t+1}(S_{t+1})\mid S_t, A_t\right]\right] \explain{Lemma~\ref{lem stats unbiasedness}} \\
=& \left(\sum_{a} \pi_t(a |S_t)\sqrt{\hat{q}_{\pi, t}(S_t, a)}\right)^2 - v^2_{\pi, t}(S_t) - \E_{A_t \sim \hat \mu_t}\left[\rho_t^2\E_{S_{t+1}}\left[\epsilon_{t+1}(S_{t+1})\mid S_t, A_t\right]\right]\explain{Definition of $\hat \mu$ \eqref{def hat mu} and  Lemma~\ref{lem:math-variance-0}} \\
=& \sum_{a} \pi_t(a|S_t)\hat{q}_{\pi, t}(S_t, a)  - v^2_{\pi, t}(S_t)  + \left(\sum_{a} \pi_t(a |S_t)\sqrt{\hat{q}_{\pi, t}(S_t, a)}\right)^2 - \sum_{a} \pi_t(a|S_t)\hat{q}_{\pi, t}(S_t, a)  \\
&- \E_{A_t \sim \hat \mu_t}\left[\rho_t^2\E_{S_{t+1}}\left[\epsilon_{t+1}(S_{t+1})\mid S_t, A_t\right]\right] \\
=& \V\left(\pdisg(\tau^{\pi_{t:T-1}}_{t:T-1})\mid S_t\right)  +  \left(\sum_{a} \pi_t(a |S_t)\sqrt{\hat{q}_{\pi, t}(S_t, a)}\right)^2  - \sum_{a}\pi_t(a|S_t)\hat{q}_{\pi, t}(S_t, a) \\
&- \E_{A_t \sim \hat \mu_t}\left[\rho_t^2\E_{S_{t+1}}\left[\epsilon_{t+1}(S_{t+1})\mid S_t, A_t\right]\right] \explain{By~\eqref{eq hat and tilde q} and Lemma~\ref{lem:u_variance_eq}} \\
=& \V\left(\pdisg(\tau^{\pi_{t:T-1}}_{t:T-1})\mid S_t\right) - \epsilon_t(S_t)  \explain{Definition of $\epsilon$ \eqref{def:epsilon}}. 
\end{align}

\end{proof}


\subsection{Proof of Theorem~\ref{lem:hat-q-recursive}}\label{append:hat-q-recursive}
\begin{proof}
For $t = T-1$, we have
\begin{align}
 \hat{q}_{\pi, t}(s, a)  &=  q^2_{ \pi, t}( s, a) \explain{Definition of $\hat q_{\pi, t}$~\eqref{eq def q hat last step}} \\
&= \hat{r}_{\pi,t}(s,a). \explain{By $q_{\pi, T-1}(s, a) = r(s, a)$ and Theorem~\ref{lem:hat-q-recursive}}
\end{align}
For $t\in [T-2]$, we have
\begin{align}
&\hat{q}_{\pi,t}(s,a) \\
=& \tilde{q}_{ \pi, t}( s, a) + v^2_{\pi, t}(s) \explain{By~\eqref{eq hat and tilde q}} \\
=&  \tilde r_{\pi,t}(s, a) + v^2_{\pi, t}(s)  + \sum_{s', a'} p(s'|s, a) \pi_{t+1}(a'|s') \tilde{q}_{\pi, t+1}(s', a')  \explain{Definition of $\tilde{q}$ \eqref{def:tilde-q}} \\
=& \tilde r_{\pi,t}(s, a) + v^2_{\pi, t}(s)  + \sum_{s', a'} p(s'|s, a) \pi_{t+1}(a'|s') (\tilde{q}_{\pi, t+1}(s', a') + v^2_{\pi, t+1}(s') -  v^2_{\pi, t+1}(s')) \\
=& \tilde r_{\pi,t}(s, a) + v^2_{\pi, t}(s)  + \sum_{s', a'} p(s'|s, a) \pi_{t+1}(a'|s') (\hat{q}_{\pi, t+1}(s', a') -  v^2_{\pi, t+1}(s')) \explain{By~\eqref{eq hat and tilde q}}\\
=& \nu_{\pi,t}(s, a) + q^2_{\pi, t}(s, a) - \sum_{s'} p(s'|s, a)   v^2_{\pi, t+1}(s') +  \sum_{s', a'} p(s'|s, a) \pi_{t+1}(a'|s') \hat{q}_{\pi, t+1}(s', a') \explain{Definition of $\tilde{r}$ \eqref{def:tilde-r}}\\
=& -(\E [ v_{\pi, t+1}(S_{t+1})\mid S_t=s, A_t=a ])^2 + q^2_{\pi, t}(s, a) + \sum_{s', a'} p(s'|s, a) \pi_{t+1}(a'|s') \hat{q}_{\pi, t+1}(s', a')  \explain{Definition of $\nu$ \eqref{def:nu}} \\
=& -(q_{\pi, t}(s, a) - r(s, a) )^2 + q^2_{\pi, t}(s, a) + \sum_{s', a'} p(s'|s, a) \pi_{t+1}(a'|s') \hat{q}_{\pi, t+1}(s', a') \\
=&  2r(s, a) q_{\pi, t}(s, a)- r^2(s, a)  + \sum_{s', a'} p(s'|s, a) \pi_{t+1}(a'|s') \hat{q}_{\pi, t+1}(s', a') \\
=&  \hat r_{\pi, t}(s, a)  + \sum_{s', a'} p(s'|s, a) \pi_{t+1}(a'|s') \hat{q}_{\pi, t+1}(s', a'), \explain{By Theorem~\ref{lem:hat-q-recursive}}
\end{align}
which completes the proof.
\end{proof}

\subsection{Proof of Theorem~\ref{lem:error analysis}}\label{append:error analysis}
\begin{proof}

We first derive an important equality.
$\forall t$,
\begin{align}
&\E_{A_t\sim \hat \mu^+_t}\left[{\rho^+_t}^2  \hat q_{\pi, t}(S_t, A_t) \mid S_t\right] \\
=& \sum_a \frac{\pi_t^2(a|S_t)}{\hat{\mu}_t^+(a|S_t) }  \hat{q}_{\pi, t}(S_t, a) \\
=& \sum_a \frac{\pi_t^2(a|S_t)}{\frac{\pi_t(a|S_t)\sqrt{\hat q^+_{\pi, t}(S_t, a)}}{ \sum_b \pi_t(b|S_t) \sqrt{\hat q^+_{\pi, t}(S_t, b)}} }  \hat q_{\pi, t}(S_t, a) \explain{by \eqref{def: mu +}}\\
=& \left[\sum_a \pi_t(a|S_t) \sqrt{\hat{q}^+_{\pi, t}(S_t, a)}\right] \qty[\sum_a \pi_t(a|S_t) \frac{ \hat{q}_{\pi, t}(S_t, a)}{\sqrt{\hat{q}^+_{\pi, t}(S_t, a)}} ] \\
=& \explaind{\left[\sum_a \pi_t(a|S_t) \sqrt{\eta_{\pi, t}(S_t, a)}\sqrt{\hat q_{\pi, t}(S_t, a)}\right] \qty[\sum_a \pi_t(a|S_t) \frac{1}{\sqrt{\eta_{\pi, t}(S_t, a)}} \sqrt{\hat q_{\pi, t}(S_t, a)}].}{By \eqref{def: eta}} \label{eq: error sqrt eq}
\end{align}
We proceed via induction. 
For $t = T-1$, we  have
\begin{align}
&\V\left(\pdisg(\tau^{\hat{\mu}^+_{t:T-1}}_{t:T-1})\mid S_t\right) \\
=&\E_{A_t\sim \hat{\mu}^+_t}\left[{\rho^+_t}^2 q_{\pi, t}^2(S_t, A_t) \mid S_t\right] - v_{\pi, t}^2(S_t) \explain{Lemma~\ref{lem:recursive-var}}\\
=&\E_{A_t\sim \hat{\mu}^+_t}\left[{\rho^+_t}^2  \hat q_{\pi, t}(S_t, A_t) \mid S_t\right] - v^2_{\pi, t}(S_t) \explain{Definition of $\hat q$ \eqref{eq def q hat last step}} \\
=&\left[\sum_a \pi_t(a|S_t) \sqrt{\eta_{\pi, t}(S_t, a)}\sqrt{\hat{q}_{\pi, t}(S_t, a)}\right] \left[\sum_a \pi_t(a|S_t) \frac{1}{\sqrt{\eta_{\pi, t}(S_t, a)}} \sqrt{\hat{q}_{\pi, t}(S_t, a)}\right]  - v^2_{\pi, t}(S_t) \explain{By \eqref{eq: error sqrt eq}} \\
=& \sum_{a} \pi_t(a|S_t)\hat{q}_{\pi, t}(S_t, a) +  \left[\sum_a \pi_t(a|S_t) \sqrt{\eta_{\pi, t}(S_t, a)}\sqrt{\hat{q}_{\pi, t}(S_t, a)}\right] \left[\sum_a \pi_t(a|S_t) \frac{1}{\sqrt{\eta_{\pi, t}(S_t, a)}} \sqrt{\hat{q}_{\pi, t}(S_t, a)}\right] \\
&- \sum_{a}\pi_t(a|S_t)\hat{q}_{\pi, t}(S_t, a) - v_{\pi, t}^2(S_t) \\
=& \V\left(\pdisg(\tau^{\pi_{t:T-1}}_{t:T-1})\mid S_t\right)    \\
-& \left(\sum_{a}\pi_t(a|S_t)\hat{q}_{\pi, t}(S_t, a)-\left[\sum_a \pi_t(a|S_t) \sqrt{\eta_{\pi, t}(S_t, a)}\sqrt{\hat{q}_{\pi, t}(S_t, a)}\right] \left[\sum_a \pi_t(a|S_t) \frac{1}{\sqrt{\eta_{\pi, t}(S_t, a)}} \sqrt{\hat{q}_{\pi, t}(S_t, a)}\right] \right)\explain{By~\eqref{eq hat and tilde q} and Lemma~\ref{lem:u_variance_eq}} \\
=& \V\left(\pdisg(\tau^{\pi_{t:T-1}}_{t:T-1})\mid S_t\right) - \epsilon^+_t(S_t).  \explain{Definition of $\epsilon^+$ \eqref{def: epsilon +}}  
\end{align}

For $t\in [T-2]$,
we have
\begin{align}
&\V\left(\pdisg(\tau^{\hat{\mu}^+_{t:T-1}}_{t:T-1})\mid S_t\right) \\
=&\E_{A_t\sim \hat{\mu}^+_t}\left[\rho_t^2 \left(\E_{S_{t+1}}\left[\V\left(\pdisg(\tau^{\hat{\mu}^+_{t+1:T-1}}_{t+1:T-1}) \mid S_{t+1}\right) \mid S_t, A_t\right] + \nu_{\pi,t}(S_t, A_t) + q_{\pi, t}^2(S_t, A_t)\right) \mid S_t\right] \\
&-  v_{\pi, t}^2(S_t) \explain{Lemma~\ref{lem:recursive-var}}\\
\leq&\E_{A_t\sim \hat{\mu}^+_t}\Big[\rho_t^2 \Big(\E_{S_{t+1}}\Big[ \sum_{a'} \pi_{t+1}(a'|S_{t+1})\tilde {q}_{\pi, t+1}(S_{t+1}, a') \mid S_t, A_t\Big] + \nu_{\pi,t}(S_t, A_t) \\
& +  q_{\pi, t}^2(S_t, A_t)\Big) \mid S_t\Big]  - v_{\pi, t}^2(S_t) - \E_{A_t \sim \hat{\mu}^+_t}\left[\rho_t^2\E_{S_{t+1}}\left[\epsilon^+_{t+1}(S_{t+1})\mid S_t, A_t\right]\right] \explain{Inductive hypothesis and Lemma~\ref{lem:u_variance_eq}} \\
=&\E_{A_t\sim \hat{\mu}^+_t}\left[\rho_t^2 \left(\tilde{q}_{\pi, t}(S_t, A_t) + v^2_{\pi, t}(S_t)\right) \mid S_t\right] - v^2_{\pi, t}(S_t)  - \E_{A_t \sim \hat{\mu}^+_t}\left[\rho_t^2\E_{S_{t+1}}\left[\epsilon^+_{t+1}(S_{t+1})\mid S_t, A_t\right]\right] \explain{Definition of $\tilde q $ \eqref{def:tilde-q}} \\
=&\E_{A_t\sim \hat{\mu}^+_t}\left[\rho_t^2 \hat q_{\pi, t}(S_t, A_t) \mid S_t\right] - v^2_{\pi, t}(S_t)  - \E_{A_t \sim \hat{\mu}^+_t}\left[\rho_t^2\E_{S_{t+1}}\left[\epsilon^+_{t+1}(S_{t+1})\mid S_t, A_t\right]\right] \explain{Definition of $\hat q$ \eqref{eq def q hat}} \\
=& \left[\sum_a \pi_t(a|S_t) \sqrt{\eta_{\pi, t}(S_t, a)}\sqrt{\hat{q}_{\pi, t}(S_t, a)}\right] \left[\sum_a \pi_t(a|S_t) \frac{1}{\sqrt{\eta_{\pi, t}(S_t, a)}} \sqrt{\hat{q}_{\pi, t}(S_t, a)}\right] - v_{\pi, t}^2(S_t)  \\
&- \E_{A_t \sim \hat{\mu}^+_t}\left[\rho_t^2\E_{S_{t+1}}\left[\epsilon^+_{t+1}(S_{t+1})\mid S_t, A_t\right]\right] \explain{By \eqref{eq: error sqrt eq}}\\
=& \sum_{a} \pi_t(a|S_t)\hat{q}_{\pi, t}(S_t, a)  - v^2_{\pi, t}(S_t)  \\
&+ \left[\sum_a \pi_t(a|S_t) \sqrt{\eta_{\pi, t}(S_t, a)}\sqrt{\hat{q}_{\pi, t}(S_t, a)}\right] \left[\sum_a \pi_t(a|S_t) \frac{1}{\sqrt{\eta_{\pi, t}(S_t, a)}} \sqrt{\hat{q}_{\pi, t}(S_t, a)}\right] - \sum_{a} \pi_t(a|S_t)\hat{q}_{\pi, t}(S_t, a)  \\
&- \E_{A_t \sim \hat{\mu}^+_t}\left[\rho_t^2\E_{S_{t+1}}\left[\epsilon^+_{t+1}(S_{t+1})\mid S_t, A_t\right]\right] \\
=& \V\left(\pdisg(\tau^{\pi_{t:T-1}}_{t:T-1})\mid S_t\right)  +  \left[\sum_a \pi_t(a|S_t) \sqrt{\eta_{\pi, t}(S_t, a)}\sqrt{\hat{q}_{\pi, t}(S_t, a)}\right] \left[\sum_a \pi_t(a|S_t) \frac{1}{\sqrt{\eta_{\pi, t}(S_t, a)}} \sqrt{\hat{q}_{\pi, t}(S_t, a)}\right]  \\
&- \sum_{a}\pi_t(a|S_t)\hat{q}_{\pi, t}(S_t, a)- \E_{A_t \sim \hat{\mu}^+_t}\left[\rho_t^2\E_{S_{t+1}}\left[\epsilon^+_{t+1}(S_{t+1})\mid S_t, A_t\right]\right]\explain{By~\eqref{eq hat and tilde q} and Lemma~\ref{lem:u_variance_eq}} ]\\
=& \V\left(\pdisg(\tau^{\pi_{t:T-1}}_{t:T-1})\mid S_t\right) - \epsilon^+_t(S_t)  \explain{Definition of $\epsilon^+$ \eqref{def: epsilon +}}. 
\end{align}

\end{proof}

\clearpage

\section{Experiment Details}\label{append:experiment}

\subsection{GridWorld}\label{append:gridworld}

For a Gridworld with size $n$, its width, height,
and time horizon $T$ are all set to $n$.
There are four possible actions: up, down, left, and right. After taking an action, the agent has a $0.9$ probability of moving accordingly and a $0.1$ probability of moving uniformly at random. 
If the agent runs into a boundary, 
the agent stays in its current location. 
The reward function $r(s, a)$ is randomly generated and fixed after generation.
We normalize the rewards across all $(s, a)$ such that $\max_{s,a} r(s,a) = 1$. 
We consider a set of randomly generated target policies.
The ground truth policy performance is estimated using the on-policy Monte Carlo method by running each target policy for $10^6$ episodes.
We test two different sizes of the Gridworld with a number of $1,000$ and $27,000$ states.
The offline dataset contains $m = 10^5$ randomly generated tuples.
For a Gridworld of size $n$,
the total amount of possible $(s, t, a, r, s')$ tuples is
$n \times n \times n  \times 4 \times 4 = 16 n^3$.
The offline data coverages for the Gridworld of size $1,000$ and $27,000$ are then 62.5\% and 2.3\%.

We use a one-hot vector representing the position of the agent and a real number representing the current time step as features for the state.
We execute Algorithm \ref{alg: ODI algorithm} to approximate function $r$, $q$, and $\hat{q}$. 
As shown in Algorithm \ref{alg: ODI algorithm}, 
we train $r$ using supervised learning by batch stochastic gradient descent.
We train $q$ and $\hat{q}$ using fitted $Q$-learning.
We split the offline data into a training set and a test set. We
tune all hyperparameters offline
based on the supervised learning loss and fitted $Q$-learning loss on the test set.  With the Adam optimizer \citep{kingma2014adam}, 
we 
search the learning rates in $\qty{2^{-20},2^{-18},\cdots,2^{0}}$ to minimize the loss on the offline data and 
use the learning rate $2^{-10}$ on all learning processes.
For the behavior policy search (BPS, \citet{hanna2017data})
and robust on-policy sampling (ROS, \citet{zhong2022robust}) algorithms,
we use the reported parameters from  \citet{hanna2017data} and \citet{zhong2022robust},
since it is not clear how to do hyperparameter turning for BPS and ROS with only offline data.


\subsection{MuJoCo}\label{append:mujoco}
\begin{figure}[ht]
\begin{minipage}{0.18\textwidth}
\centering
\includegraphics[width=1\textwidth]{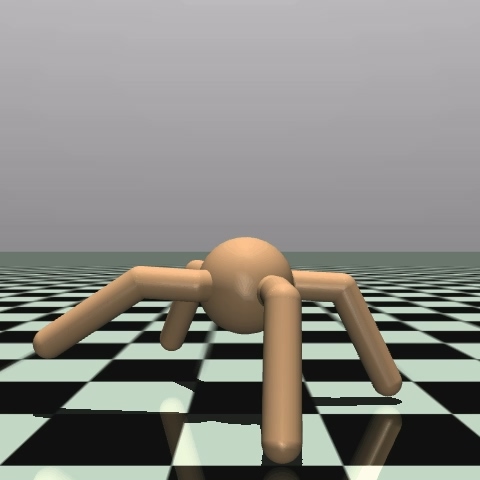}
\end{minipage}
\begin{minipage}{0.18\textwidth}
\centering \includegraphics[width=1\textwidth]{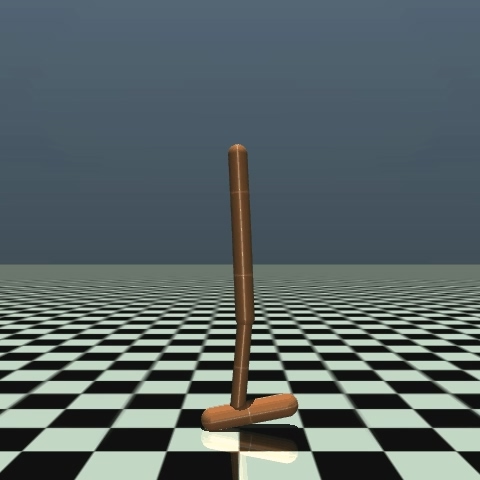}
\end{minipage}
\begin{minipage}{0.18\textwidth}
\centering \includegraphics[width=1\textwidth]{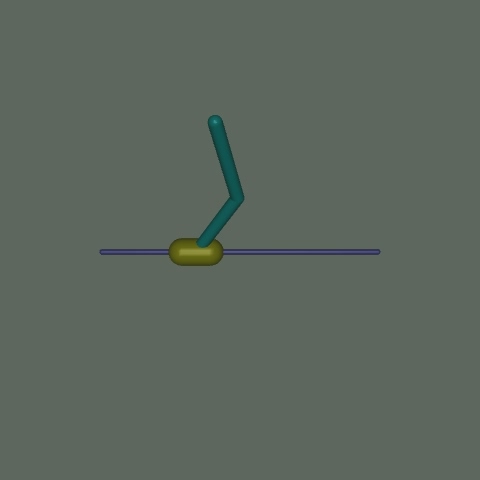}
\end{minipage}
\begin{minipage}{0.18\textwidth}
\centering \includegraphics[width=1\textwidth]{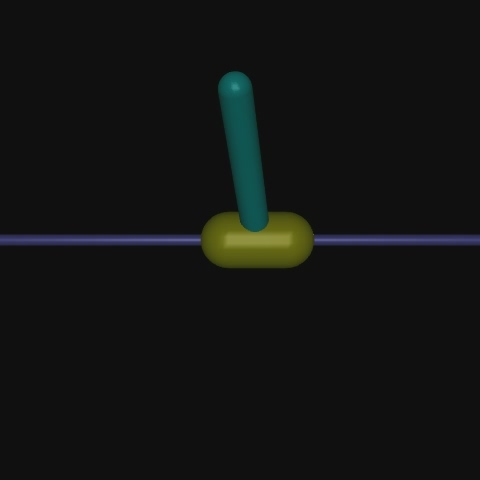}
\end{minipage}
\begin{minipage}{0.18\textwidth}
\centering \includegraphics[width=1\textwidth]{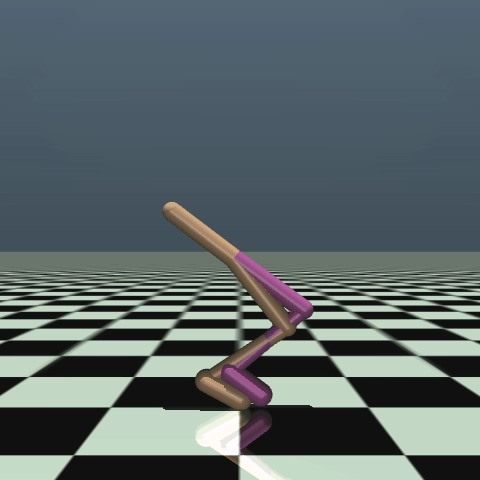}
\end{minipage}
\centering
\caption{
MuJoCo \citep{todorov2012mujoco} robot simulation tasks.
MuJoCo is a physics engine for robotics simulation and contains various stochastic environments. The goal in each environment is to control a robot to achieve different behaviors such as walking, jumping, and balancing.
Environments from the left to the right are Ant, Hopper, InvertedDoublePendulum,  InvertedPendulum, and Walker. We conducted experiments on those five environments with results reported in Section~\ref{sec:experiment}.
} 
\label{fig:cart_pole_image}
\end{figure}

Figure \ref{fig:cart_pole_image} is an introduction to the MuJoCo environments.
We construct $150$ policies ($30$ policies in each environment) with a wide range of performance using 
the proximal policy optimization (PPO) algorithm \citep{schulman2017proximal} and the default PPO implementation in \citet{huang2022cleanrl}. 
Since our methods are designed for discrete action space,
we discretize the first dimension of MuJoCo action space in our experiments.
The remaining dimensions are controlled by the PPO policy and are deemed as part of the environment.
We run each compared algorithm $30$ times for each policy and compute the average and standard error to plot curves in Figure \ref{fig:mujoco}.
To generate offline data,
we add different levels of noise to the target policy and run noisy target policies for $2000$ episodes.
The noise is in the form of a uniformly random policy, and its weight is uniformly randomly sampled from $(0, 0.1]$.
This data generation process simulates the data generated during the training of a policy. 
Notably, compared with previous works, we do not need data to be complete trajectories or generated by known policies. 
We leave the investigation of entirely irrelevant offline data in the MuJoCo domain for future work.
Our algorithm is robust on hyperparameters. 
All learning rates in Algorithm \ref{alg: ODI algorithm} are tuned offline and are the same $2^{-10}$ across all MuJoCo and Gridworld experiments.

In  MuJcCo, the episode length varies because of stochasticity in policies and environments. 
Because the length of each episode is not fixed, episodes in off-policy estimation may be longer than episodes in on-policy estimation.
In the main text, we use episodes instead of steps  
as the $x$-axis mainly to improve readability.
Because 
after running $100$ steps,
we might already have a good estimate for a target policy with a length of $10$ 
but may still not finish a single episode for a target policy with a length of $250$.
Due to the diversity of our target policies,
averaging using steps as the $x$-axis makes the plot conceptually hard to interpret.

\begin{figure}[ht]
\includegraphics[width=1\textwidth]{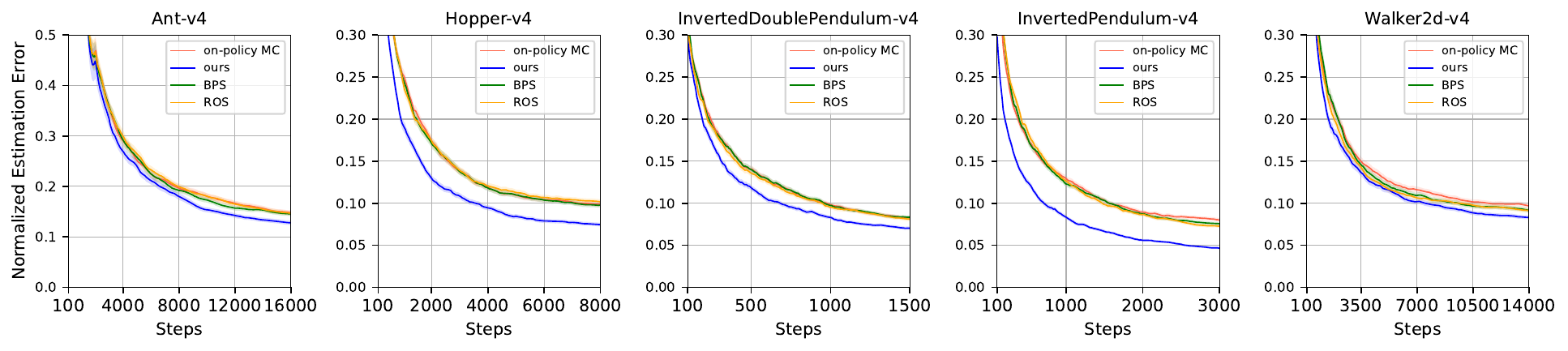}
\centering
\caption{
MuJoCo results using steps as the $x$-axis. We draw each curve from step $100$ because some policies need more than $100$ steps to finish the first episode. All curves are averaged over 900 trials (30 target policies, each having 30 independent runs).
The shaded regions denote standard errors and are invisible because they are too small.
}
\label{fig:carat_pole_step}
\end{figure}

We anyway show the figure with steps as the $x$-axis
in Figure \ref{fig:carat_pole_step}.
Setting steps as the $x$-axis, we linearly interpolate the estimation error across episodes. At each step, we average the estimation error for all tests that have completed the first episode and, thus, have an estimate. 
The estimation error is divided by the first estimate of the on-policy estimation to get the normalized estimation error.
Although the normalized estimation error for the on-policy estimation starts from $1$, it may be unstable until around $1000$ steps because different policies get the first estimate at different steps.
However, it is still clear that our off-policy estimator achieves the same accuracy with fewer online steps.

\clearpage

\clearpage

\end{document}